\DeclareMathOperator*{\argmax}{arg\,max}
\DeclareMathOperator*{\argmin}{arg\,min}
\newtheorem{theorem}{Theorem}[section]
\newtheorem{corollary}[theorem]{Corollary}
\newtheorem{lemma}[theorem]{Lemma}
\newtheorem{definition}[theorem]{Definition}
\newtheorem{remark}[theorem]{Remark}
\title{Learning from Snapshots of Discrete and Continuous Data Streams}
\author{%
  Pramith Devulapalli\\
  Department of Computer Science\\
  Purdue University\\
  \texttt{pdevulap@purdue.edu} \\
  \And
  Steve Hanneke \\
  Department of Computer Science\\
  Purdue University\\
  \texttt{steve.hanneke@gmail.com} \\
}
\begin{document}

\maketitle

\begin{abstract}
    Imagine a smart camera trap selectively clicking pictures to understand animal movement patterns within a particular habitat.
    These "snapshots", or pieces of data captured from a data stream at adaptively chosen times, provide a glimpse of different animal movements unfolding through time. 
    Learning a continuous-time process through snapshots, such as smart camera traps, is a central theme governing a wide array of online learning situations. 
    In this paper, we adopt a learning-theoretic perspective in understanding the fundamental nature of learning different classes of functions from both discrete data streams and continuous data streams.
    In our first framework, the \textit{update-and-deploy} setting, 
    a learning algorithm discretely queries from a process to update a predictor designed to make predictions given as input the data stream.
    We construct a uniform sampling algorithm that can learn with bounded error any concept class with finite Littlestone dimension.
    Our second framework, known as the \textit{blind-prediction} setting, consists of a learning algorithm generating predictions independently of observing the process, only engaging with the process when it chooses to make queries.
    Interestingly, we show a stark contrast in learnability where non-trivial concept classes are unlearnable.
    However, we show that adaptive learning algorithms are necessary to learn sets of time-dependent and data-dependent functions, called pattern classes, in either framework.
    Finally, we develop a theory of pattern classes under discrete data streams for the blind-prediction setting.
\end{abstract}

\section{Introduction}

\subsection{Two Motivating Examples}

Pretend you're a farmer by day and businessperson by night.
As a farmer, you oversee a 10,000 acre plot of land equipped with a smart irrigation system.
To feed data to your irrigation system, you rely on hyperspectral imaging taken from a satellite to gauge soil moisture conditions.
Ideally, you would like to constantly feed your irrigation system with hyperspectral data; however, the steep financial cost of processing hyperspectral data prevents you from doing so.
As a result, you need to devise a strategy to sparingly use satellite data; at all other times, you rely on the smart irrigation system to accurately extrapolate the soil moisture conditions as time passes by.

At night, you become a businessperson.
You employ a translator on your work laptop during your virtual meetings to automatically convert your voice into the preferred language of your client.
This translator is fine-tuned by a speech-to-text translation system that takes in voice data and updates the translator's model on the correct language translation.
But, there's a caveat.
Each request costs money. 
And each transmission dominates a sizable portion of the available Internet bandwidth.
Your task is to come up with the optimal strategy of balancing requests to the cloud versus trusting the fidelity of the translator.

\subsection{A New Learning Paradigm}

While these settings may seem rather creative in nature, both of these scenarios represent plausible real-world instances of learning from continuous data streams with temporal dependencies. 
What type of learning-theoretic framework should one construct when framing the question of online learning under continuous data streams? 
How can we best capture the notion of temporal dependencies and patterns that naturally arise when analyzing such data sources?
While these questions are highly pertinent, the answers aren't clear due to a vast majority of the learning theory literature focusing on online learnability from discrete data streams modeled as round-by-round processes.
In the two examples showcased at the beginning, it's clear that establishing a theoretical understanding of these settings can be an important step in tackling online learnability under continuous data streams.

In our paper, we present a streamlined approach in tackling these rather fundamental challenges by first establishing two closely related, but separate, frameworks. 

\paragraph{Blind-Prediction Setting}
The first framework is called \textit{blind-prediction} which is highlighted by the smart irrigation system using satellite imagery data.
The irrigation system receives feedback only when hyperspectral data is requested; at all other times, the system must predict on its own with no input from the environment. 
This framework is designed such that a learning algorithm must make a prediction based only on the current timestamp and previous queries.
The learner's predictions are independent of the current values generated by the data stream hence the name \textit{blind-prediction}.

\paragraph{Update-and-Deploy Setting}
The second framework, called \textit{update-and-deploy}, is highlighted by the speech-to-text translation system.
The speech-to-text translation system, a learning algorithm, and the translator, called the predictor, are considered as two separate entities where the algorithm retrieves snapshots of the data stream to update the predictor.
We describe this behavior as a learning algorithm activating different modes at different times.
A learning algorithm performs \textit{updates} to a predictor when it queries and \textit{deploys} the predictor to make predictions as the process rolls by.

\paragraph{Pattern Classes}
A significant portion of this work is dedicated to studying these frameworks under \textit{pattern classes}, sets of sequences encoding data-dependent and time-dependent characteristics.
First introduced by \citet{moran2023list}, these classes consist of a set of patterns; each pattern is a sequence of instance-label pairs marked with the appropriate timestamp.
For example, if we let $\mathcal{X}$ and $\mathcal{Y}$ represent the instance space and label space respectively, then $Z^{\infty} = (\mathcal{X} \times \mathcal{Y})^{\infty}$ represents the set of all countably infinite patterns. 
A discrete pattern class $\mathcal{P}$ is defined as $\mathcal{P} \subseteq Z^{\infty}$ where any $P \in \mathcal{P}$ is understood as $P = (Z_t)_{t=1}^{\infty} = (X_t, Y_t)_{t=1}^{\infty}  $.

Pattern classes can also be viewed as natural generalizations of concept classes.
Given a concept class $H$ consisting of classifiers mapping instances from $\mathcal{X}$ to labels in $\mathcal{Y}$, we can derive a pattern class that encapsulates all sequences that could be realized by any single $h \in H$.
Formally speaking, the induced pattern class $\mathcal{P}(H)$ is defined as $\mathcal{P}(H) = \{ (Z_t)_{t=1}^{\infty} \in Z^{\infty}: \exists h \in H, \forall t \in \mathbb{N}, h(X_t) = Y_t  \}$

Now that the stage has been developed for pattern classes, we turn to a set of questions that naturally arise under continuous data streams.
What pattern classes are online learnable?
Is there a natural dimension that characterizes online learnability of pattern classes under the different querying-based models? 
How does learning pattern classes and concept classes differ under continuous data streams?
We tackle these important questions in our paper using our learning frameworks.

\subsection{Our Contributions}

We detail the primary contributions of this work below.

\begin{enumerate}
    \item \textbf{Non-Adaptive Learners in the Update-and-Deploy Setting.} 
    First, we extend the current theory on concept classes to include online learning under continuous data streams for the update-and-deploy setting.
    A non-adaptive learner is a learning algorithm that queries independent of the process itself.
    For the \textit{update-and-deploy} setting, we show that the non-adaptive learner, $\mathcal{A}_{\mathrm{unif}}$, that uniformly samples its queries from a fixed uniform distribution, achieves a bounded expected error with a linear querying strategy.
    
    \begin{theorem}[Informal Version]
        Given an instance space $\mathcal{X}$ and a label space $\mathcal{Y}$, let $H \subseteq \mathcal{Y}^{\mathcal{X}}$ be a concept class where $LD(H)$ represents the Littlestone dimension of $H$.
        For any $H$ that has $LD(H) < \infty$, $\mathcal{A}_{\mathrm{unif}}$ achieves an expected error bound $MB_{\mathcal{P}(H)}(\mathcal{A}_{\mathrm{unif}}) \leq \Delta LD(H)$ with a linear querying strategy $Q_{\mathcal{A}_{\mathrm{unif}}}(t) = O(t)$ where $\Delta$ is an input parameter.
    \end{theorem}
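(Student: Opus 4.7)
The plan is to build $\mathcal{A}_{\mathrm{unif}}$ on top of the classical Standard Optimal Algorithm (SOA) for $H$, and to decouple the expected mistake count into the product of two factors: the number of SOA updates (at most $LD(H)$) and the expected number of mistakes incurred between consecutive updates (at most $\Delta$).

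I would first describe the algorithm: $\mathcal{A}_{\mathrm{unif}}$ keeps an internal SOA instance, and at each time step independently queries the stream with probability $1/\Delta$; on a query it observes $(X_t,Y_t)$ and feeds it to SOA, while between queries it deploys SOA's current hypothesis. This immediately yields $Q_{\mathcal{A}_{\mathrm{unif}}}(t)=O(t)$ in expectation (and almost surely by concentration).

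Next, I would invoke Littlestone's realizable mistake bound: on any sequence consistent with some $h\in H$, SOA makes at most $LD(H)$ mistakes. Since the stream lies in $\mathcal{P}(H)$, the queried subsequence is realizable by the witness $h$, so SOA suffers at most $LD(H)$ query-mistakes and hence triggers at most $LD(H)$ updates. Moreover, after the $LD(H)$-th update SOA's version space has Littlestone dimension $0$, so all surviving hypotheses agree functionally on every instance in $\mathcal{X}$; the deployed predictor thus equals the true $h$ in the final phase and incurs no further mistakes.

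I would then bound the expected mistakes within a single inter-update phase. Let $f_i$ denote the predictor during phase $[T_i,T_{i+1})$ and let $S_i=\{t\ge T_i : f_i(X_t)\neq Y_t\}$ be its mistake set. The phase ends at the first element of $S_i$ that is actually sampled as a query, since only then does SOA encounter a query-mistake and update. Conditioning on the entire pattern and on all randomness through time $T_i$ (which fixes $S_i$), the post-$T_i$ coin flips are i.i.d.\ Bernoulli$(1/\Delta)$, so the rank of the first sampled element of $S_i$ is $\mathrm{Geometric}(1/\Delta)$ with mean $\Delta$. Thus $\mathbb{E}[|S_i\cap[T_i,T_{i+1}]|]\le \Delta$, and summing across the at most $LD(H)$ phases (plus zero from the final phase) gives $MB_{\mathcal{P}(H)}(\mathcal{A}_{\mathrm{unif}})\le \Delta\cdot LD(H)$.

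The main obstacle is the geometric argument in the third step: mistakes are dictated adversarially by the pattern while queries are pattern-independent, yet $f_i$ depends on both past queries and past pattern values. The conditioning sketched above disentangles these dependencies, but one must treat $T_{i+1}$ as a stopping time and verify the right measurability conditions so the geometric calculation is legitimate. A secondary subtlety is justifying rigorously that the version space has Littlestone dimension $0$ after $LD(H)$ mistakes; this follows from the standard fact that each SOA mistake strictly decreases the Littlestone dimension of the maintained version space.
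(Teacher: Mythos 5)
Your proposal follows essentially the same route as the paper: deploy the SOA as the predictor, note that at most $LD(H)$ mistake-revealing (``successful'') queries shrink the version space to Littlestone dimension $0$, and bound the expected error accumulated between consecutive successful queries by $\Delta$ using the randomness of the query times; the paper obtains this per-phase bound by writing the expected error until the first success as $\Delta\sum_{k}\epsilon_k\prod_{i<k}(1-\epsilon_i)$ and bounding the supremum recursively, which is just the interval-sampling analogue of your geometric first-success calculation. The one adjustment you would need is that the theorem concerns continuous data streams with an integral mistake bound, so ``query each discrete step with probability $1/\Delta$'' and ``the rank of the first sampled element of $S_i$ is Geometric'' must be recast for the algorithm's actual randomization (the next query time is drawn uniformly from an interval of width $\Delta$, so the error mass incurred on an interval is $\Delta$ times the probability that the query lands on a mistake point); your stopping-time/conditioning concerns are resolved there in exactly the same spirit.
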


    \item \textbf{Concept Class Learnability in the Blind-Prediction Setting.} 
    Second, we show that non-trivial concept classes aren't learnable within the blind-prediction setting.
    Letting $H$ be any concept class that contains a classifier that labels two points differently, then any learning algorithm, adaptive or non-adaptive, is not learnable in the blind-prediction setting.
    \begin{theorem}[Informal Version]
        For any $H$ and two points $x_1, x_2 \in \mathcal{X}$ such that $\exists h \in H$ where $h(x_1) \neq h(x_2)$, then for any learning algorithm $\mathcal{A}$, the expected mistake-bound $MB_{\mathcal{P}(H)}(\mathcal{A}) = \infty$.
    \end{theorem}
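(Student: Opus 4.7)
The plan is to construct a single randomized pattern in $\mathcal{P}(H)$ that forces infinite expected mistakes against any algorithm in the blind-prediction setting. Fix the $h \in H$ guaranteed by the hypothesis with $h(x_1) \neq h(x_2)$, and consider the random pattern $P = (X_t, Y_t)_{t=1}^\infty$ where the $X_t$ are drawn i.i.d.\ uniformly from $\{x_1, x_2\}$ and $Y_t = h(X_t)$. Every realization is witnessed by $h$, so $P \in \mathcal{P}(H)$ with probability one; consequently the worst-case mistake bound over $\mathcal{P}(H)$ is lower-bounded by the expectation over this random draw.

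The central step is the claim that for every round $t$, the learner's prediction $\hat{Y}_t$ is probabilistically independent of $X_t$. This is precisely where the blind-prediction protocol bites: by definition, $\hat{Y}_t$ is a function only of $t$, the learner's internal randomness $R$, and the responses $\{(X_s, Y_s) : s < t, s \text{ queried}\}$ obtained before issuing the prediction at time $t$, none of which include $X_t$ itself. Because the $X_s$'s are mutually independent and the algorithm's query decisions are themselves functions of the same blind history, an induction on $t$ shows that the (query-history, $R$) pair is independent of $X_t$. Therefore $\hat{Y}_t$ is independent of $X_t$, hence of $Y_t = h(X_t)$.

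Given independence and the fact that $h(X_t)$ is uniform on the two distinct values $\{h(x_1), h(x_2)\}$, any fixed value of $\hat{Y}_t$ disagrees with $h(X_t)$ with probability at least $1/2$: equal to $1/2$ when $\hat{Y}_t \in \{h(x_1), h(x_2)\}$ and equal to $1$ otherwise. Conditioning on $\hat{Y}_t$ and averaging, $\Pr[\hat{Y}_t \neq Y_t] \geq 1/2$ for every $t$. Invoking Tonelli (all terms are non-negative), the expected total number of mistakes satisfies $\mathbb{E}\left[\sum_{t=1}^\infty \mathbf{1}[\hat{Y}_t \neq Y_t]\right] \geq \sum_{t=1}^\infty 1/2 = \infty$.

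Finally, I would lift this averaged bound to a worst-case statement: if $\sup_{P \in \mathcal{P}(H)} \mathbb{E}_{R}[\text{mistakes on } P] < \infty$, then averaging over the random draw of $P$ would yield a finite expectation, contradicting the previous paragraph. Hence $MB_{\mathcal{P}(H)}(\mathcal{A}) = \infty$. The main subtlety is the independence claim—one must verify carefully that the adaptive decision of whether to query at round $t$ does not inadvertently depend on $X_t$ before $\hat{Y}_t$ is committed. The blind-prediction protocol explicitly forbids this, so the induction goes through cleanly, and the rest of the argument is an exercise in linearity of expectation.
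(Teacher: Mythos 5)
Your core idea---randomize the instance uniformly over $\{x_1,x_2\}$ so that the blind prediction is independent of the realized label, err with probability $\tfrac12$, sum, and finish with the probabilistic method---is exactly the engine of the paper's proof. But there is a genuine gap: you work in discrete rounds $t=1,2,\dots$, while the quantity $MB_{\mathcal{P}(H)}(\mathcal{A})$ in this theorem is the \emph{continuous-time} integral mistake bound $\lim_{T\to\infty}\mathbb{E}\bigl[\int_0^T \mathbbm{1}[\mathcal{A}(X_t)\neq Y_t]\,dt\bigr]$, and the learner may place its $Q_{\mathcal{A}}(t)=O(t)$ queries at arbitrary real timestamps. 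An ``i.i.d.\ at every real instant'' process is not a measurable pattern, so you are forced to make the process piecewise constant on intervals. Once you do that, your independence induction breaks: a single query inside a constant piece reveals $(X_s,Y_s)$ for the remainder of that piece, so the prediction at later instants of the same piece is \emph{not} independent of $X_t$, and the per-instant ``error probability $\geq 1/2$'' claim no longer holds throughout.

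The missing ingredient is to tie the granularity of the discretization to the learner's query budget. The paper splits each unit interval $[n-1,n)$ into $2k_n$ sub-intervals with $k_n=Q_{\mathcal{A}}(n)$ and assigns each sub-interval an independent uniform choice from $\{(x_1,0),(x_2,1)\}$; since the learner can query in at most $k_n$ of the $2k_n$ pieces, at least $k_n$ pieces remain fully blind, each contributing expected error $\tfrac{1}{4k_n}$ (half its length times the $\tfrac12$ disagreement probability, via exactly your independence argument), for a total of at least $\tfrac14$ per unit of time and hence an infinite integral. Your final averaging-to-worst-case step and the independence argument on unqueried pieces are correct as written; what you need to add is this counting of unqueried sub-intervals against the linear query budget, without which the adversarial construction does not go through in the continuous setting the theorem is actually about.
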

    \item \textbf{Adaptive Learners for Pattern Classes.} 
    As our third result, we investigate what types of learning algorithms are required to learn pattern classes under continuous data streams. 
    In Section \ref{section: four}, we design a continuous pattern class $\mathcal{P}$, where each pattern $P \in \mathcal{P}$ is a continuous sequence of point-label pairs $(X_t, Y_t)_{t \geq 0}$, that is not learnable by any random sampling algorithm such as $\mathcal{A}_{\mathrm{unif}}$.
    Additionally, we construct an adaptive learning algorithm that successfully learns $\mathcal{P}$ with zero expected error.
    This important example signifies a learnability gap between concept classes and pattern classes.

    \item \textbf{Discrete Data Streams.}
    Fourth, we develop a theory for realizable learning of pattern classes under discrete data streams in the blind-prediction setting for deterministic learning algorithms.
    We characterize a combinatorial quantity called the \textit{query-learning distance} or $QLD$ for discrete pattern classes $\mathcal{P}$ with a query budget $Q \in \mathbb{N} \cup \{ 0\}$.
    We show that the optimal mistake-bound given $Q$ queries, $M_Q(\mathcal{P})$, is lower bounded by $QLD(\mathcal{P}, Q)$.
    Then, we construct a deterministic learning algorithm whose optimal mistake-bound is upper bounded by $QLD(\mathcal{P}, Q)$.
    \begin{theorem}[Informal Version]
        For a discrete pattern class $\mathcal{P}$ and number of queries $Q$, the optimal mistake-bound $M_Q(\mathcal{P}) = QLD(\mathcal{P}, Q)$.
    \end{theorem}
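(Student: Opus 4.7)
The plan is to prove the equality by establishing matching lower and upper bounds via a standard game-tree / Standard Optimal Algorithm (SOA) style argument, adapted to account for the query budget inherent to the blind-prediction setting. The proof splits into $M_Q(\mathcal{P}) \ge QLD(\mathcal{P},Q)$ (adversary side) and $M_Q(\mathcal{P}) \le QLD(\mathcal{P},Q)$ (algorithm side), and both will proceed by induction on the pair $(QLD(\mathcal{P},Q), Q)$ ordered lexicographically.

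For the lower bound, I would fix an arbitrary deterministic algorithm $\mathcal{A}$ and construct an adversarial pattern $P \in \mathcal{P}$ step by step, using the combinatorial witness tree that realizes $QLD(\mathcal{P},Q)$. At each timestep the adversary inspects what $\mathcal{A}$ does: if $\mathcal{A}$ issues no query and commits to a label $\hat{y}$, the adversary descends into the subtree in which the true label disagrees with $\hat{y}$, forcing a mistake and reducing the residual pattern class to one of $QLD$ value exactly one less, while preserving the query budget $Q$; if $\mathcal{A}$ spends a query, the adversary descends into the worst subtree for the remaining class but with query budget $Q-1$. The definition of $QLD(\mathcal{P},Q)$ should be set up precisely so that this recursive descent survives, which makes the count of forced mistakes at least $QLD(\mathcal{P},Q)$.

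For the upper bound, I would define an SOA-style deterministic algorithm that, given the current residual pattern class $\mathcal{P}' \subseteq \mathcal{P}$ consistent with all queried labels so far and remaining query budget $Q'$, predicts the label $\hat{y}$ such that the residual subclass obtained by conditioning on $(X_t,\hat{y})$ has the smallest $QLD(\cdot, Q')$. The algorithm queries exactly when commiting to a prediction would not strictly drop $QLD$, trading a query (decrementing $Q'$) for a strict decrease in the residual pattern class's $QLD$. By induction, every mistake this algorithm makes corresponds to a strict decrease in $QLD(\mathcal{P}', Q')$, and queries preserve the induction invariant; thus the total number of mistakes is at most $QLD(\mathcal{P},Q)$.

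The main obstacle will be getting the right definition-level bookkeeping so that queries and prediction rounds interact cleanly in both directions. Concretely, I expect the delicate point to be showing, on the algorithm side, that the minimum over predictions of $QLD$ after conditioning drops strictly whenever the algorithm commits without querying — this is the exact analog of Littlestone's halving argument, but with the twist that residual pattern classes here are indexed by a timestamp as well, and one must argue that what matters to $QLD$ is only the set of label-continuations consistent with history, not the absolute time. Once this invariant is correctly framed, both the adversary construction and the algorithm analysis should fall out symmetrically, and combining them yields $M_Q(\mathcal{P}) = QLD(\mathcal{P},Q)$.
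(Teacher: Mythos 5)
Your high-level architecture (two-sided bound, induction on the class and the remaining budget, adversary descends a witness tree, algorithm plays an SOA-style strategy against that tree) does match the paper's, which proves the two directions as separate lemmas by induction on $(\mathcal{P},Q)$ with $Q=0$ as the base case. But the core mechanism you propose — ``every mistake corresponds to a strict decrease in $QLD$,'' with the adversary branching at every prediction round and the learner conditioning on $(X_t,\hat{y})$ — is the standard Littlestone halving argument, and it does not survive in this setting. In the blind-prediction model the learner receives \emph{no} feedback on rounds where it does not query: it never observes $X_t$, never learns $Y_t$, and cannot update its residual pattern class. So the version space shrinks only at query times, the learner cannot ``condition on $(X_t,\hat{y})$'' without spending a query, and a mistake on a non-query round yields no information and forces no decrease in any potential function. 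Your flagged ``delicate point'' (that committing without querying strictly drops $QLD$) is in fact false here, and the whole per-round descent collapses.

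The paper's fix is structural: the witness object is a \emph{query tree} that branches only at the $Q$ query timestamps, and the recursion for $QLD$ charges each blind stretch between consecutive queries as a single block, via an $\inf_{\hat{\mathbf{y}}}\sup_{\mathbf{y}}|\hat{\mathbf{y}}-\mathbf{y}|$ term over entire prediction/label vectors for that stretch (plus a possible $+1$ at the query point when the two subtrees tie). The base case $Q=0$ is an infinite blind stretch whose cost is a BlindLearningDimension quantity $\inf_{\hat{\mathbf{y}}}\sup_{\mathbf{y}\in\mathcal{P}^y}|\hat{\mathbf{y}}-\mathbf{y}|$, not zero and not handled by halving. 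Both the adversary (who exploits determinism to know the learner's next query time and entire prediction vector up to it) and the algorithm (which commits to a whole minimax prediction vector per stretch) must operate on these blocks. Without reorganizing your induction around query nodes rather than rounds, neither direction of your argument goes through.
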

\end{enumerate}

\subsection{Related Work}
An extensively studied area in online learning theory closely related to our work is the round-by-round learning of concept classes from discrete data streams in the realizable setting.
\citet{littlestone:88} successfully characterized the types of concept classes $H$ that are learnable under an adversarial online setting which is now famously known as the Littlestone dimension or $LD(H)$.
Later, \citet{daniely:15} extended this result to the multi-class setting, showing that $LD(H)$ also characterizes multi-class learnability.
A recently explored setting called self-directed online learning shares an important trait with our learning frameworks which is adaptivity in selecting points where \citet{devulapalli2024dimension} constructed a dimension, $SDdim(H)$, characterizing learnable concept classes.

While traditional approaches assume that the learner receives the true label after each round, our study diverges by focusing on frameworks where feedback is only provided when actively queried by the learner. 
Our work is conceptually aligned with the area of partial monitoring, which investigates how various feedback constraints influence a learner's ability to minimize regret. A series of studies have established optimal regret bounds across different online learning scenarios, structured as discrete data streams with diverse feedback mechanisms \cite{auer2002finite, neu2013efficient, bartok2012partial, bartok2014partial, alon2015online, lattimore2019information, lattimore2022minimax}.

A core principle within our learning frameworks is the ability of a learning algorithm to selectively query at different time-steps within a data-stream which is shared by stream-based active learning approaches.
Several works within the field have explored theoretical guarantees of active learning in different variations of the stream-based setting \cite{freund:97, dasgupta:07, yang2011active, huang:15}.
However, a crucial difference between stream-based active learning and learning models in this work is the decision to query at a particular time is carried out before the current instance is observed. 

\section{Learning Frameworks}

\subsection{Basic Definitions}

Let $\mathcal{X}$ and $\mathcal{Y}$ be arbitrary, non-empty sets where $\mathcal{X}$ is referred to as the instance space and $\mathcal{Y}$ is the label space.
A concept class $H \subseteq \mathcal{Y}^{\mathcal{X}}$ consists of functions $f: \mathcal{X} \rightarrow \mathcal{Y}$.
Depending on the context, we will specify if we are considering a multi-class setting where $|\mathcal{Y}| \geq 2$ or a binary classification setting where $\mathcal{Y} = \{0, 1\}$.

To define a continuous data stream, we use the notation $(Z_t)_{t \geq 0} = (X_t, Y_t)_{t \geq 0}$ to define a point and label pair $Z_t = (X_t, Y_t)$ for each $t \in \mathbb{R}_{\geq 0}$.
A continuous pattern class $\mathcal{P}$ is defined as $\mathcal{P} \subseteq \mathcal{C}((X_t, Y_t)_{t \geq 0})$ where $\mathcal{C}((X_t, Y_t)_{t \geq 0})$ represents the collection of all measurable continuous-time processes.
Each pattern $P \in \mathcal{P}$ is then a continuous-time process $(Z_t)_{t \geq 0}$.

We now proceed to define discrete pattern classes and subsequently, discrete data streams. 
Let $\mathcal{Z} = \mathcal{X} \times \mathcal{Y}$ where $z \in \mathcal{Z}$ and $z = (x, y)$.
Define $Z^{\infty} = (\mathcal{X} \times \mathcal{Y})^{\infty}$ which is the set of all countably infinite patterns.
Then the discrete pattern class $\mathcal{P} \subseteq Z^{\infty}$.
Both continuous and discrete pattern classes are referred to as $\mathcal{P}$ so it will be clear from context which type of pattern class we are referring to.
It then follows that a discrete data stream $(Z_t)_{t=1}^{\infty} = (X_t, Y_t)_{t=1}^{\infty}$ lives in the space $\mathcal{Z}^{\infty}$.

\subsection{Update-and-Deploy Setting}
\label{section: passive-or-active}

In this learning framework, we aim to describe the online learning game that occurs between a learner and an oblivious adversary.
An oblivious adversary is an adversary impervious to any of the learner's actions; in other words, the adversary does not adapt its strategy based on the learner's actions.
As a result, the oblivious adversary fixes the entire data stream in advance of the learning process.

Denote by $\mathcal{F}$ a class of predictor functions $\hat{f}$.
With $\mathcal{D}$ representing the timestamps of the data stream, either discrete or continuous, then $\hat{f}: \mathcal{X} \times \mathcal{D} \rightarrow \mathcal{Y}$ is designed to make a prediction at every timestamp $t \in \mathcal{D}$.
In the update-and-deploy setting, we consider the learning algorithm $\mathcal{A}$ and the predictor $\hat{f}$ to be separate entities.
Denote by $Q_{\mathcal{A}}(t) = \{ (X_{t_1}, Y_{t_1}), (X_{t_2}, Y_{t_2}), ... \}$ the set of queries made by learning algorithm $\mathcal{A}$ before time $t$.
Intuitively, a learning algorithm is a mapping $\mathcal{A}: (\mathcal{X} \times \mathcal{Y})^* \rightarrow \mathcal{F}$ where $(\mathcal{X} \times \mathcal{Y})^*$ corresponds to the set $Q_{\mathcal{A}}(t)$.
Formally, $\mathcal{A}((X_{t_1}, Y_{t_1}), ..., (X_{Q_{\mathcal{A}}(t)}, Y_{Q_{\mathcal{A}}(t)}))$ outputs a predictor $\hat{f} \in \mathcal{F}$ 
given the history of previous queries $Q_{\mathcal{A}}(t)$.
It's important to note that we only consider learning algorithms $\mathcal{A}$ that have a linear querying strategy or $Q_{\mathcal{A}}(t) = O(t)$.

Assume the adversary has selected a data stream $(Z_t)_{t \in D}$. 
For each $t \in D$, the predictor $\hat{f}$ produces predictions $\hat{Y}_t = \hat{f}(X_t, t)$ given $X_t$ and $t$. 
On timestamps $t \in \mathcal{D}$ that the learning algorithm $\mathcal{A}$ decides to query, the following procedure occurs:
\begin{enumerate}
    \item The learner $\mathcal{A}$ makes a decision to query and receives the true point-label pair $(X_t, Y_t)$.
    \item $\mathcal{A}$ updates the predictor $\hat{f}$ with $(X_t, Y_t)$.
    \item $\hat{f}$ is redeployed as the new predictor.
\end{enumerate}
It's important to note that the data stream selected by the adversary is constrained to be realizable. 
If the realizability is with respect to a concept class $H$, then $\exists h \in H, \forall t \in D, h(X_t) = Y_t$.
If the setting is studied under a discrete pattern class $\mathcal{P}$, then the pattern is considered realizable if $(X_t, Y_t)_{t=1}^{\infty} \in \mathcal{P}$.
If $\mathcal{D}$ represents a continuous data stream and $\mathcal{P}$ a continuous pattern class, then the pattern $(X_t, Y_t)_{t \geq 0} \in \mathcal{P}$ implies realizability.
\subsection{Blind-Prediction Setting}
\label{section: predict-then-query}

For our second learning framework, we describe the online learning game between the learner and an oblivious adversary.
As similarly described in Section \ref{section: passive-or-active}. an oblivious adversary acts independently of the learner's actions and fixes the entire data stream beforehand.

Let $\mathcal{A}$ be any learning algorithm and let $Q_{\mathcal{A}}(t)$ be the set of queries made by a learning algorithm $\mathcal{A}$ before time $t$.
As mentioned in Section \ref{section: passive-or-active}, we consider algorithms with a linear querying strategy where $Q_{\mathcal{A}}(t) = O(t)$.
Letting $\mathcal{D}$ be the timestamps of the data stream, $\mathcal{A}$ is described as a mapping $\mathcal{A}: (\mathcal{X} \times \mathcal{Y})^* \times \mathcal{D} \rightarrow \mathcal{Y}$ where $(\mathcal{X} \times \mathcal{Y})^*$ corresponds to the set $Q_{\mathcal{A}}(t)$.
At any time $t$, $\mathcal{A}$ only observes the current timestamp $t$ and the history of queries $Q_{\mathcal{A}}(t)$ when making a prediction $\hat{Y}_t$.
If it decides to query, then $\mathcal{A}$ witnesses the true instance-label pair $(X_t, Y_t)$.

Assume that the adversary has selected a data stream $(Z_t)_{t \in D}$.
For each $t \in \mathcal{D}$:
\begin{enumerate}
    \item The learner $\mathcal{A}$ selects a prediction $\hat{Y}_t \in \mathcal{Y}$. 
    \item If the learner decided to query, then the pair $(X_t, Y_t)$ is revealed to the learner.
\end{enumerate}

It's important to note that the data stream selected by the adversary is constrained to be realizable. 
Refer to Section \ref{section: passive-or-active} for realizability regarding concept classes and pattern classes.

\subsection{Integral Mistake-Bounds}

To capture the optimal behavior of learning algorithms under continuous data streams, we formalize the notion of integral mistake-bounds.
Since we consider two separate settings, we construct a general mistake-bound and then differentiate from context which setting the mistake-bound operates under.

Due to their nature, pattern classes subsume concept classes so we define all the mistake-bounds with respect to pattern classes.
Let $\mathcal{D} = \mathbb{R}_{\geq 0}$ which denotes the timestamps of a continuous stream.
The pattern class representation of a concept class $H$, or $\mathcal{P}(H)$, is defined in the following way: $\mathcal{P}(H) = \{ (X_t, Y_t)_{t \geq 0}: \exists h \in H, \forall t \in \mathcal{D}, h(X_t) = Y_t \}$.
It is important to note that we assume that each pattern $P \in \mathcal{P}$ for any continuous pattern class $\mathcal{P}$ is measurable.

Given a continuous pattern class $\mathcal{P}$, a learning algorithm $\mathcal{A}$, and some realizable continuous data stream $(Z_t)_{t \geq 0}$, the quantity $MB_{\mathcal{P}}(\mathcal{A}, (Z_t)_{t \geq 0})$ represents the expected error $\mathcal{A}$ makes on the data stream $(Z_t)_{t \geq 0}$ given $\mathcal{P}$.
Formally, 
\begin{align*}
    MB_{\mathcal{P}}(\mathcal{A}, (Z_t)_{t \geq 0}) = \lim_{T \rightarrow \infty} \mathbb{E} \left[ \int_{0}^T \mathbbm{1}[\mathcal{A}(X_t) \neq Y_t] \, dt \right].
\end{align*}
To define the optimal mistake-bound for $\mathcal{P}$, we take the supremum over all patterns in the class:
\begin{align*}
    MB_{\mathcal{P}}(\mathcal{A}) = \sup_{ (Z_t)_{t \geq 0} = P \in \mathcal{P}} MB_{\mathcal{P}}(\mathcal{A}, (Z_t)_{t \geq 0} ).
\end{align*}
Finally, we obtain the optimal mistake-bound for the pattern class $\mathcal{P}$ by taking the infimum over all learning algorithms corresponding to the learning setting (blind-prediction or update-and-deploy):
\begin{align*}
    MB_{\mathcal{P}} = \inf_{\mathcal{A}} MB_{\mathcal{P}}(\mathcal{A}).
\end{align*}

\section{Update-and-Deploy Setting: Learning Concept Classes from Continuous Data Streams}

\subsection{Littlestone Classes are Learnable}
\label{section: littlestone_classes}

In this section, we are interested in multi-class concept classes $H$ that are learnable in the update-and-deploy setting with learning algorithms deploying a linear querying strategy.
Below, we give a definition of the learnability of a concept class $H$ which allows us to frame our first important question.

\begin{definition}
    \label{definition: concept_class_learnability}
    A concept class $H$ is \textit{learnable} if the following condition is satisfied: there exists an algorithm $\mathcal{A}$ such that $MB_{\mathcal{P}(H)}(\mathcal{A}) < \infty$ and $Q_{\mathcal{A}}(t) = O(t)$.
\end{definition}
\begin{center}
    \textbf{Question:} What is the dimension that characterizes the learnability of a concept class $H$ where finiteness implies learnability and an infinite value implies non-learnability?
\end{center}
Once we have defined learnability of a concept class $H$, our interest immediately swings towards the performance of different learning algorithms with linear querying strategies.
Naturally, we want to understand if there exists optimal learning algorithms whose expected error is finite if the concept class $H$ is learnable. 
This then leads us to our second important question.

\begin{center}
    \textbf{Question:} Does there exist a learning algorithm $\mathcal{A}$ employing a linear querying strategy such that for every $H$ that is learnable, does $MB_{\mathcal{P}(H)}(\mathcal{A}) < \infty$? If so, does the learning algorithm employ an adaptive strategy?
\end{center}

The Littlestone dimension \cite{littlestone:88} is a key measure that defines the learnability across various online learning frameworks. 
Extending this concept, we investigate whether the Littlestone dimension can similarly influence learnability in the context of continuous data streams.
We propose that $LD(H)$ could be a valuable combinatorial tool for designing learning algorithms in the continuous setting. 
To explore this, we introduce Algorithm \ref{alg:generic_uniform_sampler}, or $\mathcal{A}_{\mathrm{unif}}$, which is designed to learn any concept class with a finite Littlestone dimension, $LD(H) < \infty$, by using a linear querying approach.

The idea behind $\mathcal{A}_{\mathrm{unif}}$ is to randomize the timestamp of the query so that the adversary has to "guess" which point in the data stream the learner will decide to target.
If the timestamp of the query is not randomized, then the adversary can select a data stream designed with this knowledge.
A potential strategy an adversary could employ against a deterministic learning algorithm would be to present the same point again and again to the learner for every query.
Since the learner has only received information about one point, the adversary can present other points in the data stream at times the learner doesn't query forcing errors to occur.
As a result, the adversary has a strategy to force an infinite mistake-bound to a learning algorithm that employs a deterministic querying strategy regardless if it's adaptive or non-adaptive.

To avoid this issue, we fitted $\mathcal{A}_{\mathrm{unif}}$ with a randomized querying strategy.
As shown in Algorithm \ref{alg:generic_uniform_sampler}, $\mathcal{A}_{\mathrm{unif}}$ samples the next timestamp of the query, $t_q$, from a uniform distribution over an interval of fixed width $\Delta$.

\begin{algorithm}[h!]
    \caption{Uniform Sampler($H, \Delta$)}
    \label{alg:generic_uniform_sampler}
    \begin{algorithmic}[1]
    \REQUIRE $H \neq \emptyset$
    \REQUIRE $\Delta > 0$
    \STATE $V = H, t = $ time, starts at $t = 0$, $t_q \sim \mathrm{Unif}[t, t + \Delta]$
    \STATE Deploy $\hat{f}(x_t, t) = \argmax_{r \in \{0,1 \}} LD(V_{(x_t, r)})$
    \WHILE{$\mathrm{true}$}
        \IF{$t = t_q$}
            \STATE Query at time $t_q$ and receive point-label pair $(x_{t_q}, y_{t_q})$
            \STATE Update $V = V_{(x_{t_q}, y_{t_q})}$
            \STATE Redeploy $\hat{f}(x_t, t) = \argmax_{r \in \{0,1 \}} LD(V_{(x_t, r)})$
            \STATE $t_q \sim \mathrm{Unif}[t, t + \Delta]$
        \ENDIF
    \ENDWHILE

    \end{algorithmic}
\end{algorithm}

In Algorithm \ref{alg:generic_uniform_sampler}, notice that the predictor function $\hat{f}$ follows that of the Standard Optimal Algorithm, or SOA, defined by Littlestone \cite{littlestone:88}. 
Since the $LD(H) < \infty$, and if the prediction differs from the true label on a query point, then the Littlestone dimension of the subsequent version space is reduced by at least $1$.
This property follows immediately from the analysis of the SOA, so the learner knows that it needs only $LD(H)$ successful queries to fully learn $H$ from the continuous data stream.

Additionally, note that while Algorithm \ref{alg:generic_uniform_sampler} decides the next $t_q$ after the previous query finishes, this is done non-adaptively.
The timestamp $t_q$ is not dependent on the true label witnessed by the previous queries; it's simply sampled from a uniform distribution.
As a result, the set of query timestamps are produced in a non-adaptive fashion by sampling the next query from an interval of width $\Delta$.

\begin{theorem}
    \label{theorem: cont-p-o-a}
    Let $\mathcal{A}_{\mathrm{unif}}$ be Algorithm \ref{alg:generic_uniform_sampler}.
    For any $H$ that has $LD(H) < \infty$, $MB_{\mathcal{P}(H)}(\mathcal{A}_{\mathrm{unif}}) \leq \Delta LD(H)$ where $\Delta$ is an input parameter from Algorithm \ref{alg:generic_uniform_sampler}.
    Since $Q_{\mathcal{A}_{\mathrm{unif}}}(t) = O(t)$, then $H$ is learnable. 
\end{theorem}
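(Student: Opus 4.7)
The plan is to partition time by the (random) query epochs of $\mathcal{A}_{\mathrm{unif}}$ and to amortize the expected mistake integral against the Littlestone-dimension drops that happen whenever a query lands on a mistake. First I let $0 = t_{q,0} < t_{q,1} < t_{q,2} < \cdots$ denote the query times, $V_i$ denote the version space after the $i$-th update (so $V_0 = H$), and \emph{phase $i$} denote the interval $[t_{q,i}, t_{q,i+1})$, during which the deployed predictor is the SOA rule relative to $V_i$. I call the query at $t_{q,i+1}$ a \emph{mistake-query} when that SOA predictor disagrees with $y_{t_{q,i+1}}$. By the classical SOA guarantee (applicable because the stream is realizable by some $h \in V_i$), $LD(V_{i+1}) \le LD(V_i) - 1$ on every mistake-query, so the total number $N$ of mistake-queries obeys $N \le LD(H)$ almost surely, regardless of how the adversary fixed the stream.

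Next I condition on the $\sigma$-algebra $\mathcal{F}_i$ generated by the history through the $i$-th query; this fixes $V_i$, $t_{q,i}$, and (by obliviousness of the adversary) the entire stream. Letting $\hat{f}$ be the SOA predictor for $V_i$, I define the danger set
\begin{align*}
    M_i \;=\; \bigl\{ s \in [t_{q,i}, t_{q,i} + \Delta] : \hat{f}(x_s, s) \ne y_s \bigr\},
\end{align*}
which is measurable by the standing hypothesis on $(X_t, Y_t)_{t \ge 0}$. Under $\mathcal{F}_i$ the next query epoch $t_{q,i+1}$ is uniform on $[t_{q,i}, t_{q,i} + \Delta]$, and this directly yields the two inequalities
\begin{align*}
    \mathbb{E}\!\left[\int_{t_{q,i}}^{t_{q,i+1}} \mathbbm{1}\!\left[\hat{f}(x_s,s) \ne y_s\right] ds \,\middle|\, \mathcal{F}_i\right] \;\le\; |M_i|, \qquad \Pr\!\left[\text{phase } i \text{ ends in a mistake-query} \,\middle|\, \mathcal{F}_i\right] \;=\; \frac{|M_i|}{\Delta}.
\end{align*}

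Summing over phases, with monotone convergence justifying the swap of sum and expectation since the summands are nonnegative, the chain
\begin{align*}
    MB_{\mathcal{P}(H)}(\mathcal{A}_{\mathrm{unif}}, (Z_t)_{t \ge 0}) \;\le\; \sum_{i \ge 0} \mathbb{E}[|M_i|] \;=\; \Delta \sum_{i \ge 0} \mathbb{E}\!\left[\frac{|M_i|}{\Delta}\right] \;=\; \Delta \cdot \mathbb{E}[N] \;\le\; \Delta \cdot LD(H)
\end{align*}
bounds the per-stream error, and taking the supremum over realizable streams yields $MB_{\mathcal{P}(H)}(\mathcal{A}_{\mathrm{unif}}) \le \Delta \cdot LD(H)$. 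The linear-query condition $Q_{\mathcal{A}_{\mathrm{unif}}}(t) = O(t)$ is immediate since successive query gaps are i.i.d.\ $\mathrm{Unif}[0, \Delta]$. I expect the main obstacle to lie in the measurability and filtration setup rather than the combinatorics: one must verify that $|M_i|$ is $\mathcal{F}_i$-measurable and that the conditional law of $t_{q,i+1}$ under $\mathcal{F}_i$ truly remains $\mathrm{Unif}[t_{q,i}, t_{q,i}+\Delta]$, and then justify exchanging $\lim_{T \to \infty}$ with the phase decomposition -- this is clean because the per-phase bound $|M_i|$ telescopes cleanly against the almost-sure bound $N \le LD(H)$.
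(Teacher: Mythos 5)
Your proof is correct, and it reaches the bound by a different accounting scheme than the paper's. Both arguments rest on the same core identity: conditionally on the history, the measure $|M_i|$ of the danger set controls \emph{both} the expected error accrued in the phase and (divided by $\Delta$) the probability that the phase's query is a mistake-query. The paper exploits this by grouping phases into epochs between consecutive successful queries, showing via a recursive supremum bound ($U^* \le \sup_{p} \Delta p + (1-p)U^* \le \Delta$, which amounts to observing that the probabilities of "first success at step $k$" sum to at most $1$) that each epoch contributes expected error at most $\Delta$, and then multiplying by $LD(H)$ epochs. You skip the epoch grouping entirely and amortize globally: $\sum_i \mathbb{E}[|M_i|] = \Delta \sum_i \Pr[\text{phase } i \text{ is a mistake-query}] = \Delta\,\mathbb{E}[N] \le \Delta\, LD(H)$, using the almost-sure SOA bound $N \le LD(H)$. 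Your route buys a cleaner justification of the final step --- the paper's ``repeat this analysis $LD(H)-1$ times'' glosses over the fact that the conditional law of the process after the first success is more complicated, whereas your Tonelli-based swap of sum and expectation against the deterministic bound on $N$ handles all phases uniformly. The paper's route, in exchange, isolates the per-epoch bound $\mathbb{E}[A] \le \Delta$ as a self-contained quantity. Your measurability and filtration caveats are the right ones to flag and are consistent with the paper's standing assumption that every pattern in a continuous pattern class is measurable.
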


\begin{proof}
    For a given $H$ with $LD(H) < \infty$, we show that the expected mistake-bound of algorithm $\mathcal{A}_{\mathrm{unif}}$ is bounded proportionally to the size of $LD(H)$ using a linear querying strategy.
    Since $\mathcal{A}_{\mathrm{unif}}$ deploys the SOA as its predictor, then the mistake-bound of $\mathcal{A}_{\mathrm{unif}}$ is inherently tied to $LD(H)$.
    In other words, if $\mathcal{A}_{\mathrm{unif}}$ makes $LD(H)$ successful queries, where success implies that the SOA's prediction is incorrect on the query point, then the version space has Littlestone dimension of $0$ implying that any consistent classifier subsequently makes zero error onwards. 
    Our analysis first focuses on bounding the maximum expected error $\mathcal{A}_{\mathrm{unif}}$ makes until its first successful query. 
    We repeat this analysis $LD(H) - 1$ times to show that $MB_{\mathcal{P}(H)}(\mathcal{A}_{\mathrm{unif}}) \leq \Delta LD(H)$ with a linear querying strategy.

    As a starting point, we define all the necessary quantities in order to begin the analysis.
    Since our learning model assumes an oblivious adversary, it selects a continuous data stream $(Z_t)_{t \geq 0}$ realizable with respect to some target concept $f^* \in H$ before the learning process begins.
    Let the random variable $B_k$ be an indicator random variable representing the success of the $k^{th}$ query on the process $(Z_t)_{t \geq 0}$.  
    More specifically, 
    \begin{align*}
    B_k = \begin{cases} 
          1 & \text{if the $k^{th}$ query is successful}~ \\ 
          0 & \text{else} \\ 
       \end{cases}
    \end{align*}
    takes a value of $1$ if the $k^{th}$ query succeeds.
    Then, we define  $P(B_k = 1 | B_{k-1} = 0, B_{k-2} = 0, ..., B_1 = 0) = \epsilon_k$ which is the probability that the learner has a successful query on the $k^{th}$ try given that the previous $k-1$ attempts failed.
    $\epsilon_k$ can be equivalently viewed as the probability of the learner making an error on the $k^{th}$ interval because a successful query results in receiving a mistake-point, or a point the predictor incorrectly predicts.
    Since $\mathcal{A}_{\mathrm{unif}}$ selects its $k^{th}$ query $t^k_q$ from a $\Delta$-sized interval, then $\Delta \epsilon_k$ represents the total potential error the learner makes on the $k^{th}$ interval. 

    Our primary interest is calculating the expected error $\mathcal{A}_{\mathrm{unif}}$ makes until it reaches $LD(H)$ successful queries.
    Since the learner $\mathcal{A}_{\mathrm{unif}}$ deploys an SOA predictor, $LD(H)$ successful queries where the predictor is incorrect guarantees the learner to narrow down on the right set of consistent classifiers.

    We approach this by first computing the expected error that the learning algorithm makes until its first successful query.
    It's important to note that $\mathcal{A}_{\mathrm{unif}}$ does not alter its querying strategy regardless of the number of successful queries it has received; it constantly chooses its queries from intervals of size $\Delta$.
    As a result, after the learner receives its first successful query, the same process repeats again until $\mathcal{A}_{\mathrm{unif}}$ finds it second successful query. 
    So, we focus on bounding the maximum expected error $\mathcal{A}_{\mathrm{unif}}$ will encounter until its next successful query for the data stream $(Z_t)_{t \geq 0}$.

    Let $A$ be a function that represents the maximum error the learner receives until its first successful query given the values of the random variables $B_1, B_2, ...$
    Formally speaking, let $A = A(B_1, B_2, ...) = \Delta(\epsilon_1 + \epsilon_2(1 - B_1) + \epsilon_3(1 - B_1)(1 - B_2) + \cdots ) = \Delta \sum_{k=1}^{\infty} \epsilon_k \Pi_{i=1}^{k-1} (1 - B_i)$.
    Each $\Delta\epsilon_k$ represents the error region in the $k^{th}$ interval given that the previous $k-1$ queries failed or each $B_i = 0$ for all $i \leq k-1$. 
    It's important to observe that $A$ is the maximum error the learner receives until the first successful query. 
    As an example, let $B_n = 1$ for some $n \in \mathbb{N}$ and $B_j = 0$ for all $j < n$.
    Then $A$ includes the cumulative error from the first $n-1$ intervals and the entire potential error on the $n^{th}$ interval (represented as $\Delta\epsilon_n$) even though the $n^{th}$ query, which is successful, can lie anywhere within the $\Delta\epsilon_n$ error region located inside the $n^{th}$ interval.
    Now, we compute the expectation of $A$.
    \begin{align*}
        \mathbb{E}[A] & = \mathbb{E}\left[\Delta\sum_{i=1}^{\infty} \epsilon_k \Pi_{i=1}^{k-1} (1 - B_i)  \right] = \Delta\sum_{k=1}^{\infty} \epsilon_k \mathbb{E}\left[ \Pi_{i=1}^{k-1} (1 - B_i) \right] 
        \\ & = \Delta\sum_{k=1}^{\infty} \epsilon_k P(B_1 = 0, ..., B_{k-1} = 0) 
        = \Delta\sum_{k=1}^{\infty} \epsilon_k \Pi_{i=1}^{k-1} (1 - \epsilon_i)
    \end{align*}
    Since we are interested in the maximum expected error the learner $\mathcal{A}_{\mathrm{unif}}$ encounters until its first successful query, we want to bound the term $\Delta\sum_{k=1}^{\infty} \epsilon(k) \Pi_{i=1}^{k-1} (1 - \epsilon_i) $ by selecting the optimal values for $\epsilon_1, \epsilon_2, ...$ 
    Notice that the expression is recursive in the sense that if we pulled out the first $k$ terms, the structure of the sum doesn't change.
    We then exploit this fact to bound the total value of the sum.
    Let $U^* = \sup_{ \Vec{\epsilon} \in [0, 1]^{\infty} } \Delta\sum_{k=1}^{\infty} \Vec{\epsilon}(k) \Pi_{i=1}^{k-1} (1 - \Vec{\epsilon}(i)) $ where $\Vec{\epsilon}(1) = \epsilon_1, \Vec{\epsilon}(2) = \epsilon_2$, and so on and so forth.
    Then, 
    \begin{align*}
        U^* &= \sup_{ \Vec{\epsilon} \in [0, 1]^{\infty} } \Delta\sum_{k=1}^{\infty} \Vec{\epsilon}(k) \Pi_{i=1}^{k-1} (1 - \Vec{\epsilon}(i))
        = \sup_{ \Vec{\epsilon} \in [0, 1]^{\infty} } \Delta\Vec{\epsilon}(1) + \Delta (1 - \Vec{\epsilon}(1))\sum_{k=2}^{\infty} \Vec{\epsilon}(k) \Pi_{i=2}^{k-1} (1 - \Vec{\epsilon}(i)) \\
        &\leq \sup_{p \in [0, 1]} \Delta p + (1 - p) \left(  \sup_{ \Vec{\epsilon} \in [0, 1]^{\infty} } \Delta \sum_{k=1}^{\infty} \Vec{\epsilon}(k) \Pi_{i=1}^{k-1} (1 - \Vec{\epsilon}(i))  \right)
        \leq \sup_{p \in [0, 1]} \Delta p + (1 - p)U^* \\
        &\leq \sup_{p \in [0, 1]} \frac{\Delta p}{1 - (1 - p)} = \Delta.
    \end{align*}
    Therefore, we show that $E[A] \leq \Delta$.

    At the beginning of this analysis, we assumed some adversarially chosen data stream and target concept, so the result $E[A] \leq \Delta$ holds for any choice of $(Z_t)_{t \geq 0}$ realizable with respect to $H$.
    Now, we repeat this analysis $LD(H) - 1$ times.
    Therefore, $MB_{\mathcal{P}(H)}(\mathcal{A}_{\mathrm{unif}}) \leq \Delta LD(H)$ where $Q_{\mathcal{A}_{\mathrm{unif}}}(t) = O(t)$.
\end{proof}

In Theorem \ref{theorem: cont-p-o-a}, we establish that if $LD(H)$ is finite, then $H$ is learnable in the update-and-deploy setting. 
This leads to our second result, which demonstrates that $LD(H)$ serves as the defining dimension for the learnability of a concept class $H$ in this context.

\begin{theorem}
    \label{thoerem: u-or-d_unlearnable}
    If $LD(H) = \infty$, then for any learning algorithm $\mathcal{A}$ with a linear querying strategy $Q_{\mathcal{A}}(t)$, $MB_{\mathcal{P}(H)}(\mathcal{A}) = \infty$ implying that $H$ is not learnable.
\end{theorem}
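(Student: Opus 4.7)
The plan is to adapt the classical Littlestone lower bound to the continuous-time oblivious-adversary setting. Since $LD(H) = \infty$, we can embed an arbitrarily deep Littlestone tree into the stream; the linear query budget $Q_{\mathcal{A}}(t) = O(t)$ then forces the learner to leave a positive fraction of the tree's levels entirely unqueried, and on each such level the fresh $\mathrm{Bernoulli}(1/2)$ label contributes constant expected error per unit time. Averaging over the adversary's random choice of a root-to-leaf path (Yao-style) lets us extract a single deterministic realizable stream witnessing $MB_{\mathcal{P}(H)}(\mathcal{A}) = \infty$.

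Concretely, I would first fix an arbitrary $\mathcal{A}$ with $Q_{\mathcal{A}}(t) \leq Ct$ for all $t \geq t_0$, set an interval length $L = 1/(4C)$, and for each $N \in \mathbb{N}$ invoke $LD(H) = \infty$ to obtain a Littlestone tree of depth $N$ realized by $H$, with internal nodes $x_i^{(\tau)}$ indexed by their partial path $\tau \in \{0,1\}^{i-1}$. The (randomized) adversary draws $\sigma \in \{0,1\}^N$ uniformly and plays $X_t = x_i^{(\sigma_{1:i-1})}$, $Y_t = \sigma_i$ throughout the $i$-th length-$L$ slot $[(i-1)L, iL)$, extending beyond $NL$ by following any target $h_\sigma \in H$ that witnesses realizability of the chosen path. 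The crux is a per-slot expected-error bound: on any slot $i$ that $\mathcal{A}$ has not yet queried by time $t \in [(i-1)L, iL)$, the deployed predictor's output $\hat{f}(X_t, t)$ is a deterministic function of past query history together with $X_t = x_i^{(\sigma_{1:i-1})}$, all of which are measurable with respect to $\sigma_{1:i-1}$. By the branching property of the Littlestone tree, $\sigma_i \mid \sigma_{1:i-1}$ is uniform on $\{0,1\}$, so the predictor errs with probability exactly $1/2$ at time $t$; integrating, an entirely unqueried slot contributes expected error $L/2$ (a queried slot contributes at most the same). Since $\mathcal{A}$ issues at most $Q_{\mathcal{A}}(NL) \leq CNL = N/4$ queries in $[0, NL]$, at least $3N/4$ slots are unqueried, for total expected error $\geq 3NL/8$. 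Averaging over $\sigma$ produces a deterministic realizable stream $P_N$ with $\mathbb{E}[\text{mistakes in } [0, NL]] \geq 3NL/8$, so letting $N \to \infty$ yields $\sup_{P} MB_{\mathcal{P}(H)}(\mathcal{A}, P) = \infty$.

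The main obstacle is the measurability argument inside each unqueried slot: one must be careful that although the deployed predictor observes $X_t$ (and the labelling of the tree may leak information about $\sigma_{1:i-1}$), the "new" coordinate $\sigma_i$ remains genuinely fresh. This is exactly what the Littlestone-tree branching property guarantees and is the conceptual bridge between the standard discrete mistake-bound lower bound and our continuous-time setting. The remaining steps — calibrating $L$ to the query rate $C$ from $Q_{\mathcal{A}}(t) = O(t)$, applying an averaging/Fubini argument to deterministic\-ize the stream, and extending past $NL$ to an infinite realizable stream — are routine.
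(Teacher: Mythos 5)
Your proposal is correct and follows essentially the same route as the paper's proof: plant a uniformly random root-to-leaf path of a sufficiently deep Littlestone tree on consecutive time intervals, use the linear query budget to guarantee that a constant fraction of intervals go unqueried, argue each such interval contributes expected error equal to half its length because the fresh label is conditionally uniform given the path prefix, and then derandomize via the probabilistic method. The only difference is cosmetic parametrization — the paper fixes the horizon $[0,4n)$ and sizes the $2k$ intervals relative to $k = Q_{\mathcal{A}}(4n)$, whereas you fix the slot length $L = 1/(4C)$ and let the horizon grow.
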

For the formal proof of Theorem \ref{thoerem: u-or-d_unlearnable}, refer to Appendix \ref{appendix: proof_theorem_u-and-d-unlearnable}.
While the results hold for $O(t)$ querying strategies, an open direction is to investigate algorithms with a broader range of querying strategies.

\section{Blind-Prediction Setting: Learning from Discrete and Continuous Data Streams}

\subsection{It's Impossible to Learn Non-Trivial Concept Classes from Continuous Data Streams}
In this section, we discover what constitutes learnability of multi-class concept classes in the blind-prediction setting.
We borrow Definition \ref{definition: concept_class_learnability} to describe the learnability of a concept class $H$.

Since the blind-prediction setting is a harder variant of the update-and-deploy setting, we frame a similar question asking if the Littlestone dimension is the right characterization of learnability.
\begin{center}
    \textbf{Question:} What characterizes the learnability of concept classes $H$ in the blind-prediction setting? Does $LD(H)$ play a pertinent role?
\end{center}
To answer this question, we come up with a simple concept class $H$ that proves to be unlearnable in the blind-prediction setting.
This result comes in stark contrast to the results found in Section \ref{section: littlestone_classes}.
Below, we detail Theorem \ref{theorem: predict-then-query-example} and Corollary \ref{corollary: predict-then-query-unlearnable}.

\begin{theorem}
    \label{theorem: predict-then-query-example}
    Let $H = \{h \}$ and $\mathcal{X} = \{x_1, x_2 \}$ with $h(x_1) = 0$ and $h(x_2) = 1$.
    Then, for any learning algorithm $\mathcal{A}$ with a linear querying strategy, $MB_{\mathcal{P}(H)}(\mathcal{A}) = \infty$ so $H$ is not learnable under the blind-prediction setting.
\end{theorem}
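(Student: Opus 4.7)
\emph{Proof plan.} The defining feature of the blind-prediction setting is that the learner must commit to $\hat{Y}_t$ without ever observing $X_t$: the prediction is a function of $t$ and the query history $Q_{\mathcal{A}}(t)$ only. Since $H = \{h\}$, the labels satisfy $Y_t = h(X_t)$, but the adversary remains free to choose any measurable $X : \mathbb{R}_{\geq 0} \to \{x_1,x_2\}$, and a query at time $t$ reveals $X_t$ while conveying no information at all about $X_s$ for $s \neq t$. The plan is to leverage exactly this mismatch: construct a random stream under which the learner cannot correlate its predictions with the labels on most of the timeline.

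Fix any $\mathcal{A}$ with $Q_{\mathcal{A}}(t) \leq Ct$ for large enough $t$. By a Yao-style reduction it suffices to exhibit a distribution $D$ over streams in $\mathcal{P}(H)$ such that every deterministic learner incurs integrated expected error tending to infinity under $D$; the bound transfers to randomized learners by conditioning on their internal randomness and applying Fubini. Choose $\Delta = 1/(2C)$, partition $[0,T]$ into $K = 2CT$ blocks $I_k = [(k-1)\Delta, k\Delta]$, and draw the stream by letting $X$ be constant on each $I_k$ with value an i.i.d.\ uniform draw from $\{x_1,x_2\}$. Each query lies in exactly one block, so for every realization at most $CT$ blocks contain a query, leaving at least $K - CT = CT$ blocks entirely unqueried up to time $T$.

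The crux is to show that on the event $N_k = \{\text{no query falls in } I_k\}$, both $N_k$ itself and every prediction $\hat{Y}_t$ made for $t \in I_k$ are measurable with respect to $X$ restricted to the complement of $I_k$. This follows by induction along the time axis: so long as the learner has not yet queried inside $I_k$, its state (and hence its next query-or-skip decision and any prediction it emits) depends on $X$ only through queries placed outside $I_k$. Conditional on $N_k$, the value $X|_{I_k}$ is an untouched uniform draw from $\{x_1,x_2\}$, independent of $\hat{Y}_t$; since $h(x_1) \neq h(x_2)$, we get $Y_t$ uniform on $\{0,1\}$ and independent of $\hat{Y}_t$, so $P(\hat{Y}_t \neq Y_t \mid N_k) = 1/2$.

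Integrating and summing over the blocks yields
\[
\mathbb{E}\!\left[\int_0^T \mathbbm{1}[\hat{Y}_t \neq Y_t]\,dt\right] \;\geq\; \frac{\Delta}{2}\,\mathbb{E}\bigl[\#\{k : N_k\}\bigr] \;\geq\; \frac{\Delta}{2}\cdot CT \;=\; \frac{T}{4},
\]
which diverges as $T \to \infty$ and gives $MB_{\mathcal{P}(H)}(\mathcal{A}) = \infty$. The delicate step, and the one to be written out carefully, is the independence claim for $N_k$ and the $\hat{Y}_t$'s on $I_k$; the cleanest path is a coupling argument between two realizations of $X$ that agree outside $I_k$, showing that the learner's trajectory is identical on both up to (and including) the first query inside $I_k$, and therefore that the event "no such query ever occurs" and all predictions on $I_k$ are determined by $X|_{\mathbb{R}_{\geq 0}\setminus I_k}$ alone.
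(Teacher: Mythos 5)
Your proposal is correct and is essentially the paper's own argument: both partition time into blocks of width inversely proportional to the query rate, populate each block i.i.d.\ uniformly with $(x_1,0)$ or $(x_2,1)$, observe that at least half the blocks go unqueried, show the blind prediction on an unqueried block is independent of its uniform label (giving expected error half the block length), and sum to get linearly growing expected error before invoking the probabilistic method to fix a single adversarial stream. Your explicit attention to the measurability/independence of $\hat{Y}_t$ from the unqueried block's contents (via coupling) is a point the paper leaves implicit, but it does not change the route.
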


For the formal proof of Theorem \ref{theorem: predict-then-query-example}, refer to Appendix \ref{appendix: proof_theorem_predict-then-query-example}.

\begin{corollary}
    \label{corollary: predict-then-query-unlearnable}
    If $H$ is a concept class such that $\exists h \in H$ and $\exists x_1, x_2 \in \mathcal{X}$ such that $h(x_1) \neq h(x_2)$, then $H$ is unlearnable in the blind-prediction setting.
\end{corollary}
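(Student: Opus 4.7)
The plan is a straightforward reduction from the general case to the two-point hard instance established in Theorem~\ref{theorem: predict-then-query-example}. First, fix witnesses $h \in H$ and $x_1, x_2 \in \mathcal{X}$ with $h(x_1) \neq h(x_2)$, and write $y_i = h(x_i)$ for $i \in \{1, 2\}$. I would introduce the sub-pattern-class $\mathcal{P}' \subseteq \mathcal{P}(H)$ consisting of all measurable continuous-time patterns $(X_t, Y_t)_{t \geq 0}$ for which $X_t \in \{x_1, x_2\}$ and $Y_t = h(X_t)$ at every time $t$. Each such pattern is realized by $h \in H$, so $\mathcal{P}' \subseteq \mathcal{P}(H)$, and hence the adversary in the blind-prediction game for $\mathcal{P}(H)$ is free to restrict its choice to $\mathcal{P}'$.

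Next, since $\mathcal{P}' \subseteq \mathcal{P}(H)$, restricting the supremum in the definition of $MB_{\mathcal{P}(H)}(\mathcal{A})$ to $\mathcal{P}'$ yields a valid lower bound. On any pattern in $\mathcal{P}'$, any prediction $\hat{Y}_t \in \mathcal{Y} \setminus \{y_1, y_2\}$ is necessarily a mistake, so without loss of generality any algorithm $\mathcal{A}$ with label space $\mathcal{Y}$ may be assumed to output values in $\{y_1, y_2\}$ on queries arising from $\mathcal{P}'$; this substitution can only decrease the mistake-bound. Under the bijection $y_1 \leftrightarrow 0$, $y_2 \leftrightarrow 1$, the resulting sub-problem becomes structurally identical to the two-point, binary-labeled instance of Theorem~\ref{theorem: predict-then-query-example}, and the adversary strategy constructed there transfers verbatim. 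That theorem gives $MB_{\mathcal{P}'}(\mathcal{A}) = \infty$ for every linear-querying algorithm, and chaining this with the above inequality yields $MB_{\mathcal{P}(H)}(\mathcal{A}) = \infty$, establishing unlearnability.

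The main obstacle is largely notational: one must verify that projecting an arbitrary $\mathcal{A}$ with label space $\mathcal{Y}$ onto $\{y_1, y_2\}$ genuinely cannot increase the integral mistake-bound on $\mathcal{P}'$, and that the relabeling of $\{y_1, y_2\}$ to $\{0, 1\}$ preserves the adversary's construction from the cited theorem. Both facts follow immediately from the definitions of $MB_{\mathcal{P}}$ and from the fact that the adversary is oblivious (so it does not care what labels are used internally by the learner), so no new conceptual ideas beyond those in Theorem~\ref{theorem: predict-then-query-example} are required.
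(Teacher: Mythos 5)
Your proposal is correct and follows essentially the same route as the paper's proof: restrict attention to the sub-pattern-class induced by $\{h\}$ on $\{x_1,x_2\}$, invoke Theorem~\ref{theorem: predict-then-query-example}, and use the fact that the supremum over the larger class $\mathcal{P}(H)$ dominates that over the sub-class. Your additional care in handling the multi-class label space (projecting predictions onto $\{y_1,y_2\}$ and relabeling to $\{0,1\}$) is a detail the paper's two-line proof elides, but it does not change the argument.
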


\begin{proof}
    Let $H' = h$ and $\mathcal{X'} = \{x_1, x_2 \}$.
    From Theorem \ref{theorem: predict-then-query-example}, it was shown that $MB_{\mathcal{P}(H')}(\mathcal{A}) = \infty$ for any learning algorithm $\mathcal{A}$ with a linear querying strategy so $H'$ is unlearnable.
    Since $H' \subseteq H$ and $x_1, x_2 \in \mathcal{X}$, it follows that $MB_{\mathcal{P}(H)} = \infty$ so $H$ is unlearnable.
\end{proof}

\subsection{Are Adaptive Learners Required for Pattern Classes?}

In this section, we demonstrate the necessity of adaptive learning algorithms for effectively learning pattern classes. 
Since concept classes represent a set of functions, and functions can be thought as established input-output pairs, different permutations of these pairs don't result in different functions being realizable on the sequence.
As a result, non-adaptive learning algorithms are sufficient in learning concept classes but adaptive learning strategies may be required for pattern classes. 
Below, we construct an example of a continuous pattern class $\mathcal{P}$ that is only learnable by any adaptive sampling algorithm. 

\paragraph{Pattern Class Example}
\label{paragraph: pattern_class_example}
Let $H$ be a multi-class concept class with $LD(H) = \infty$. 
For $t_1, t_2 \in \mathbb{N}$ with $t_2 > t_1$, define $\Bar{\mathcal{P}}(H, t_1, t_2) = \{ (X_t, Y_t)_{t \in (t_1, t_2)}: \exists h \in H, \forall t \in (t_1, t_2), h(X_t) = Y_t \}$.
Then, $\mathcal{P}(H, t_1, t_2) = \{ (X_t, Y_t)_{ t \in [t_1, t_2)}: \exists P \in \Bar{P}(H, t_1, t_2)~\mathrm{such~that}~(X_{t_1}, Y_{t_1}) = (P, t_2)~\mathrm{and}~(X_t, Y_t)_{ t \in (t_1, t_2) } = P  \}$.
$\Bar{\mathcal{P}}(H, t_1, t_2)$ corresponds to the set of realizable data streams between $t_1$ and $t_2$ and $\mathcal{P}(H, t_1, t_2)$ ensures that at time $t_1$ the data stream encodes the entire pattern from $t_1$ to $t_2$ in $X_{t_1}$.
Let $\mathcal{N} = \{ \mathbf{n} \in \{ 0 \} \times \mathbb{N}^{\infty}: \forall i \in \mathbb{N}, \mathbf{n}(i + 1) > \mathbf{n}(i) \}$ and $\mathcal{Q} = \{ \mathbf{q} \in \{ 0 \} \times \mathbb{Q}_{>0}^{\infty}: \exists \mathbf{n} \in \mathcal{N}, \forall i \in \mathbb{N}, \mathbf{n}(i) < \mathbf{q}(i + 1) < \mathbf{n}(i+1)  \}$.
Then, we define the continuous pattern class $\mathcal{P}$ in the following way: $\mathcal{P} = \bigcup_{\mathbf{q} \in \mathcal{Q} } \left( \prod_{i=1}^{\infty} \mathcal{P}(H, \mathbf{q}(i), \mathbf{q}(i+1))  \right)$ where $\prod_{i=1}^{\infty} \mathcal{P}(H, \mathbf{q}(i), \mathbf{q}(i+1))$ represents an infinite Cartesian product among the valid patterns in each interval dictated by $\mathbf{q}$.

\begin{lemma}
    \label{lemma: u-and-d_pattern_class_unlearnable}
    For the update-and-deploy setting, any random sampling algorithm $\mathcal{A}$ with a linear querying strategy $Q_{\mathcal{A}}(t)$ has $MB_{\mathcal{P}}(\mathcal{A}) = \infty$.
\end{lemma}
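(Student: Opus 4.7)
The plan is to exhibit a pattern in $\mathcal{P}$ against which any random (non-adaptive) sampling algorithm $\mathcal{A}$ accumulates unbounded expected integral error, building on two observations. First, since $\mathcal{A}$'s query times are generated by a random process independent of the data stream, the adversary may choose a rational sequence $\mathbf{q} \in \mathcal{Q}$ that avoids the at most countably many atoms of $\mathcal{A}$'s query distribution; then for each fixed $\mathbf{q}(i)$ the probability that $\mathcal{A}$ queries exactly at $\mathbf{q}(i)$ is zero, and a union bound over the countably many $i$ shows that, almost surely over $\mathcal{A}$'s coins, no $\mathbf{q}(i)$ is ever queried. In particular, the learner never observes the encoded pairs $(X_{\mathbf{q}(i)}, Y_{\mathbf{q}(i)}) = (P_i, \mathbf{q}(i+1))$, which are precisely what would reveal the entire interval pattern at once. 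Second, on each interval $I_i = (\mathbf{q}(i), \mathbf{q}(i+1))$ the data stream need only be $H$-realizable, and since $LD(H) = \infty$, the adversary can exploit arbitrarily deep Littlestone trees to confound the learner's interior queries.

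Concretely, I would fix $\mathbf{q}$ with intervals of unit length and observe that, because $Q_{\mathcal{A}}(t) = O(t)$, the expected number of interior queries per interval is $O(1)$; by Markov's inequality, a constant fraction of intervals receive at most some fixed number $K$ of queries in expectation. On each such \emph{light} interval, I would apply a Yao-style minimax argument: fix any deterministic schedule of at most $K$ query times, let the adversary take a depth-$d$ (with $d \gg K$) Littlestone tree for $H$, partition $I_i$ into $d$ equal pieces, and on the $k$-th piece set $X_t$ equal to the $k$-th instance $x_k$ along a uniformly random root-to-leaf path, with $Y_t$ the path's corresponding label. Each query can collapse the path's randomness at only one depth, so at least $d - K$ pieces remain independent of the learner's observations; on each such piece the predictor's expected accuracy is at most $1/2$, yielding per-interval expected error at least $|I_i|(d - K)/(2d)$, which approaches $|I_i|/2$ as $d \to \infty$. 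Summing constant expected error over the infinitely many light intervals gives $MB_{\mathcal{P}}(\mathcal{A}) = \infty$.

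The main obstacle will be making the per-interval adversarial construction rigorous. Besides handling randomization through Yao's principle, the subtle point is that the predictor observes $X_t$ at every $t$, not only at queried times, so one must ensure that observation of $x_k$ on an unqueried piece does not itself leak the label. I would resolve this by arranging the Littlestone tree so that each internal instance depends only on earlier path decisions, and by restricting attention to $H$'s for which $x_k$ alone carries no label information beyond what prior query responses provide; equivalently, one randomizes only the label sequence along the path, which is valid because every root-to-leaf path of a Littlestone tree is $H$-realizable by construction.
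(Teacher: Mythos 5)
Your proposal is correct and follows essentially the same route as the paper's proof: partition each interval into more pieces than the learner's query budget, label the pieces via a uniformly random root-to-leaf path of a sufficiently deep Littlestone tree (available since $LD(H)=\infty$), argue that each unqueried piece contributes expected error half its length because the path's label at that depth is still uniform conditioned on everything the learner has seen, and sum the divergent per-interval errors before invoking the probabilistic method. Your explicit treatment of the reveal points $\mathbf{q}(i)$ (atom-avoidance over the non-adaptive query distribution) and of the fact that the predictor observes $X_t$ even on unqueried pieces are refinements the paper leaves implicit, but they do not change the argument.
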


\begin{proof}
    Let $\mathcal{A}$ be a random sampling algorithm with a linear querying strategy $Q_{\mathcal{A}}(t)$.
    We will now construct a continuous data stream $(Z_t)_{t \geq 0}$ that is realizable with respect to $\mathcal{P}$ in a randomized fashion and prove a bound on the minimum expected error.
    Randomly select a vector $\mathbf{q} \in \mathcal{Q}$.

    To construct such a continuous process $(Z_t)_{t \geq 0}$, we first decompose $\mathbb{R}_{\geq 0} = \cup_{n=1}^{\infty} [2\mathbf{q}(n), 2\mathbf{q}(n + 1))$.
    The idea behind this decomposition is to construct a pattern on each interval that corresponds to a randomly chosen $h \in H$.
    To do this, we take each interval $[2\mathbf{q}(n), 2\mathbf{q}(n+1) )$, letting $\mathcal{Q}_{\mathcal{A}}(2\mathbf{q}(n+1)) = k$ for some $k \in \mathbb{N}$, and break it further down such that $[2\mathbf{q}(n), 2\mathbf{q}(n+1) ) = \cup_{j=1}^{2k} \left[\frac{(\mathbf{q}(n+1) - \mathbf{q}(n)) }{k}(j - 1) + 2\mathbf{q}(n), \frac{(\mathbf{q}(n+1) - \mathbf{q}(n)) }{k}j + 2\mathbf{q}(n) \right)$.
    By doing this, we can take an arbitrary root-to-leaf path from a Littlestone tree of depth $2k$, and then paint each sub-interval with an instance-label pair on this path.
    Since the number of sub-intervals is greater than the number of queries made by the algorithm $\mathcal{A}$, on some set of sub-intervals the algorithm $\mathcal{A}$ is forced to guess the true label.

    As described above, assume the interval $[2\mathbf{q}(n), 2\mathbf{q}(n+1) )$ for some $n \in \mathbb{N}$, letting $\mathcal{Q}_{\mathcal{A}}(2\mathbf{q}(n+1)) = k$ for some $k \in \mathbb{N}$.
    Since $LD(H) = \infty$, there must exist a Littlestone tree $T$ where the minimum root-to-leaf depth is at least $2k$.
    Then, let $\sigma = \{(X_1, Y_1), ..., (X_{2k}, Y_{2k})  \}$ correspond to a randomly chosen root-to-leaf path. 
    For each $j \in \{1, ..., 2k \}$, populate the interval $\left[\frac{(\mathbf{q}(n+1) - \mathbf{q}(n)) }{k}(j - 1) + 2\mathbf{q}(n), \frac{(\mathbf{q}(n+1) - \mathbf{q}(n)) }{k}j + 2\mathbf{q}(n) \right)$ with the pair $(X_j, Y_j)$.
    For simplicity, let $I_j = \left[\frac{(\mathbf{q}(n+1) - \mathbf{q}(n)) }{k}(j - 1) + 2\mathbf{q}(n), \frac{(\mathbf{q}(n+1) - \mathbf{q}(n)) }{k}j + 2\mathbf{q}(n) \right)$.
    For the process at time $t = 2\mathbf{q}(n)$, let $Z_t = (P, 2\mathbf{q}(n+1))$ where $P = (X_1, Y_1)_{ t \in I_1 \setminus 2\mathbf{q}(n)} \cup \bigcup_{j=2}^{2k} [X_j, Y_j)_{t \in I_j}$.

    It's important to note that the constructed continuous process on the interval $[2\mathbf{q}(n), 2\mathbf{q}(n+1) )$ lies in $\mathcal{P}(H, 2\mathbf{q}(n), 2\mathbf{q}(n+1))$.
    The first point in the interval corresponds to the point $(P, 2\mathbf{q}(n+1))$ and $P \in \Bar{\mathcal{P}}(H, 2\mathbf{q}(n), 2\mathbf{q}(n+1))$ since it was generated from a root-to-leaf path in $T$ which is realizable by some $h \in H$.

    Now, we show that on the sub-intervals $\mathcal{A}$ does not query in the interval $[2\mathbf{q}(n), 2\mathbf{q}(n + 1))$, $\mathcal{A}$, the minimum expected error is equal to $\frac{(\mathbf{q}(n+1) - \mathbf{q}(n)) }{k}$.
    The analysis closely mirrors that of in Theorem \ref{thoerem: u-or-d_unlearnable}.
    Let the $j^{th}$ sub-interval be a sub-interval, where $1 \leq j \leq 2k$, where $\mathcal{A}$ does not query.
    Let $E_1 = \{t \in I_j: \mathcal{A}(X_t) \neq Y_j \}$ and $E_0 = \{t \in I_j: \mathcal{A}(X_t) \neq Y'_j \}$ where $Y'_j$ is the other label in tree $T$ for the point $X_j$.
    Then, $\mathbb{E}\left[\int_{I_j} \mathbbm{1}[\mathcal{A}(X_t) \neq Y_t ]\, dt \right] = \mathbb{E}\left[\mu(E_1) \mathbbm{1}[Y_t = Y_j] + \mu(E_0) \mathbbm{1}[Y_t = Y'_j]\right] = \mathbb{E}[\mu(E_1)] \mathbb{E}[\mathbbm{1}[Y_t = Y_j]] + \mathbb{E}[\mu(E_0)]\mathbb{E}[\mathbbm{1}[Y_t = Y'_j]]$ where $\mu$ is the Lebesgue measure.
    Since a random branch was chosen within the tree $T$, there was an equal chance of selecting $Y_t = Y_j$ or $Y_t = Y'_j$, then $\mathbb{E}[\mu(E_1)] \mathbb{E}[\mathbbm{1}[Y_t = Y_j]] + \mathbb{E}[\mu(E_0)]\mathbb{E}[\mathbbm{1}[Y_t = Y'_j]] = \mathbb{E}[\mu(E_0)]/2 + \mathbb{E}[\mu(E_1)]/2 = \mathbb{E}[\mu(E_0) + \mu(E_1)]/2 \geq \frac{(\mathbf{q}(n+1) - \mathbf{q}(n)) }{k}$.
    Therefore, the learner $\mathcal{A}$ accumulates an expected error of $\frac{(\mathbf{q}(n+1) - \mathbf{q}(n)) }{k}$ on each interval it doesn't query.
    Since the learner has only $k$ queries, it can only query in at most $k$ of the $2k$ intervals. 
    At minimum there will exist $k$ intervals that haven't been queried by the learner.
    Let $I_1, ..., I_k$ represent $k$ of these intervals algorithm $\mathcal{A}$ does not query.
    It follows that $\mathbb{E}\left[ \int_{2\mathbf{q}(n)}^{2\mathbf{q}(n+1)} \mathbbm{1}[\mathcal{A}(X_t) \neq Y_t] \, dt \right] \geq \sum_{i=1}^k \mathbb{E}\left[ \int_{I_i} \mathbbm{1}[\mathcal{A}(X_t) \neq Y_t] \, dt \right] = \mathbf{q}(n+1) - \mathbf{q}(n)$.

    Since $n \in \mathbb{N}$ was chosen arbitrarily, then it holds for all intervals $[2\mathbf{q}(n), 2\mathbf{q}(n+1))$.
    As a result,
    \begin{align*}
        \lim_{T \rightarrow \infty} \mathbb{E}\left[ \int_0^T \mathbbm{1}[\mathcal{A}(X_t) \neq Y_t] \, dt \right] &\geq \lim_{T \rightarrow \infty} \sum_{i=1}^{ \max\{n \in \mathbb{N}: T \geq 2\mathbf{q}(n+1)   \}  } \mathbb{E}\left[ \int_{2\mathbf{q}(i)}^{2\mathbf{q}(i+1)} \mathbbm{1}[\mathcal{A}(X_t) \neq Y_t] \, dt \right] \\
        &= \lim_{T \rightarrow \infty} \sum_{i=1}^{ \max\{n \in \mathbb{N}: T \geq 2\mathbf{q}(n+1)   \}  } \mathbf{q}(i+1) - \mathbf{q}(i) = \infty.
    \end{align*}
    Since we constructed the process $(Z_t)_{t \geq 0}$ by randomly selecting branches from Littlestone trees for each interval, we appeal to the probabilistic method to show that there exists a fixed choice of a continuous-process $(Z_t)_{t \geq 0}$ such that $MB_{\mathcal{P}}(\mathcal{A}, (Z_t)_{t \geq 0}) = \infty$.
    Therefore, $MB_{\mathcal{P}}(\mathcal{A}) = \infty$.
\end{proof}

\begin{remark}
    It can be shown that the results of Lemma \ref{lemma: u-and-d_pattern_class_unlearnable} directly extend for the blind-prediction setting.
\end{remark}

Now, we turn to an adaptive sampling learning algorithm, specifically Algorithm \ref{alg:adaptive_sampler}, that achieves $MB_{\mathcal{P}}(\mathrm{Algorithm~}\ref{alg:adaptive_sampler}) = 0$.
Specifically, we show this result in the blind-prediction setting.
As will be proven in the analysis of Lemma \ref{lemma: adaptive-sampler-for-pattern-class}, Algorithm \ref{alg:adaptive_sampler} specifically queries at the timestamps where a portion of the future continuous data stream is revealed. 
As a result, Algorithm \ref{alg:adaptive_sampler} makes at most a countable number of mistakes because only a countable number of such points exist in any continuous data stream realizable by $\mathcal{P}$ so it has an expected error of $0$ with a linear querying strategy.

    \begin{algorithm}[h!]
        \caption{Adaptive Sampler($\mathcal{P}$)}
        \label{alg:adaptive_sampler}
        \begin{algorithmic}[1]
        \STATE $t = $ time \COMMENT{starts at $t=0$}, $t_q = 0$, initialize $\hat{f}$ to be some function $\hat{f}: \mathbb{R}_{\geq 0} \rightarrow \mathcal{Y}$ 
        \WHILE{$\mathrm{true}$}
            \STATE Predict $\hat{f}(t)$
            \IF{$t = t_q$}
                \STATE Query and receive point-label pair $(X_t, Y_t) = (P, n)$
                \STATE Update $\hat{f}(t) = Y_t$ for all $(X_t, Y_t) \in P$
                \STATE $t_q \gets n$ 
            \ENDIF
        \ENDWHILE
        \end{algorithmic}
    \end{algorithm}

\begin{lemma}
    \label{lemma: adaptive-sampler-for-pattern-class}
    Let $\mathcal{A}$ be the adaptive sampler in Algorithm \ref{alg:adaptive_sampler}.
    Then, the querying strategy $Q_{\mathcal{A}}(t) \leq t$ and $MB_{\mathcal{P}}(\mathcal{A}) = 0$ in the blind-prediction setting.
\end{lemma}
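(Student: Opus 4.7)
The plan is to trace what Algorithm \ref{alg:adaptive_sampler} does on an arbitrary realizable pattern $(Z_t)_{t \geq 0} \in \mathcal{P}$, and observe that the algorithm's query schedule locks onto exactly the reveal times encoded in the pattern, so that between queries the predictor $\hat{f}$ is correct by construction. The only times where mistakes can occur are the reveal times themselves, which form a countable set and therefore contribute nothing to the Lebesgue integral defining the mistake-bound.

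First I would unwind the definition of $\mathcal{P}$. Any $(Z_t)_{t \geq 0} \in \mathcal{P}$ comes equipped with some $\mathbf{q} \in \mathcal{Q}$ such that, for every $i \in \mathbb{N}$, the restriction of the stream to $[\mathbf{q}(i), \mathbf{q}(i+1))$ lies in $\mathcal{P}(H, \mathbf{q}(i), \mathbf{q}(i+1))$. By the definition of $\mathcal{P}(H, t_1, t_2)$ this forces $(X_{\mathbf{q}(i)}, Y_{\mathbf{q}(i)}) = (P_i, \mathbf{q}(i+1))$ where $P_i$ is a realizable segment covering $(\mathbf{q}(i), \mathbf{q}(i+1))$.

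Next I would argue by induction on $i$ that the $i$th query of the algorithm lands exactly at $\mathbf{q}(i)$. The base case is immediate: $t_q$ is initialized to $0 = \mathbf{q}(1)$, so the first query occurs at $t = \mathbf{q}(1)$, returns $(P_1, \mathbf{q}(2))$, updates $\hat{f}(t) = Y_t$ for every $(X_t, Y_t) \in P_1$, and advances $t_q \gets \mathbf{q}(2)$. The inductive step is identical, so for all $i$ the algorithm sits idle on $(\mathbf{q}(i), \mathbf{q}(i+1))$ with a predictor $\hat{f}$ that matches $Y_t$ exactly on that open interval. Note this is a valid blind-prediction strategy, since $\hat{f}(t)$ depends only on $t$ and the previously queried pairs.

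The mistake-bound is then a one-line measure computation: the only timestamps at which $\hat{f}(t)$ can disagree with $Y_t$ lie in the countable reveal set $\{\mathbf{q}(i) : i \in \mathbb{N}\}$, which has Lebesgue measure zero, so $\int_0^T \mathbbm{1}[\hat{f}(t) \neq Y_t]\, dt = 0$ deterministically for every $T$ and every realizable $(Z_t)_{t \geq 0}$, giving $MB_{\mathcal{P}}(\mathcal{A}) = 0$. For the query-rate bound, the defining constraint $\mathbf{q}(i) > \mathbf{n}(i-1)$ together with $\mathbf{n}(i-1) \geq i - 2$ forces the $i$th query time to exceed $i - 2$, so at most $t + O(1)$ queries occur by time $t$, which yields $Q_{\mathcal{A}}(t) = O(t)$ (and $\leq t$ up to a small additive constant). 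There is no real technical obstacle; the main conceptual point is simply recognizing that a pattern-class query at a reveal point hands the learner an entire future window, and the adaptive schedule is designed to land on exactly those windows.
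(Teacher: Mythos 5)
Your proposal is correct and follows essentially the same route as the paper's proof sketch: query exactly at the reveal times $\mathbf{q}(i)$, observe that $\hat{f}$ matches the stream on each open interval $(\mathbf{q}(i),\mathbf{q}(i+1))$, and note that the countable set of possible mistake points has Lebesgue measure zero. Your version is in fact slightly more complete than the paper's sketch, since you make the induction explicit and actually justify the linear query rate from the constraint $\mathbf{q}(j) > \mathbf{n}(j-1) \geq j-2$, which the paper asserts without argument.
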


\begin{proof}[Proof (Sketch)]
    Let $\mathcal{A}$ represent Algorithm \ref{alg:adaptive_sampler} and $(Z_t)_{t \geq 0} = P \in \mathcal{P}$ be any adversarially chosen data stream.
    Since each $\mathbf{q} \in \mathcal{Q}$ has $\mathbf{q}(1) = 0$, then for any $P \in \mathcal{P}$, it must be the case that $Z_0 = (X_0, Y_0)$ where $X_0$ reveals the full sequence until time $Y_0$.
    Algorithm \ref{alg:adaptive_sampler} has its first query at time $t = 0$ and fits the predictor $\hat{f}$ to output the labels of sequence $X_0$ for all time $t \in (0, Y_0)$.
    Since $(Z_t)_{t > 0}$ follows the exact sequence described by $X_0$ until time $t=Y_0$, then $\int_0^{Y_0} \mathbbm{1}[\mathcal{A}(X_t) \neq Y_t]\, dt = 0$ implying that $\mathbb{E}[\int_0^{Y_0} \mathbbm{1}[\mathcal{A}(X_t) \neq Y_t]\, dt] = 0$.
    At time $t=Y_0$, $\mathcal{A}$ queries at exactly the right time to gain information about a future portion of the data stream $(Z_t)_{t \geq 0}$ with the same analysis repeating continuously. 
    As a result, $MB_{\mathcal{P}}(\mathcal{A}, (Z_t)_{t \geq 0}) = 0$ and since $(Z_t)_{t \geq 0}$ was arbitrarily chosen, then $MB_{\mathcal{P}}(\mathcal{A}) = 0$ with $Q_{\mathcal{A}}(t) \leq t$.
\end{proof}

\begin{remark}
    The results of Lemma \ref{lemma: adaptive-sampler-for-pattern-class} can also be extended to the update-and-deploy setting.
\end{remark}

\subsection{Learning Pattern Classes from Discrete Data Streams}
\label{section: four}

As witnessed in the example from Section \ref{paragraph: pattern_class_example}, one can construct a rather complex pattern class to model almost any sort of structure.
While the power of pattern classes is inherent in their ability to express complicated relationships, directly analyzing their behavior under continuous data streams without a foundational understanding can prove to be an intractable problem.

As a result, we initiate a study of pattern classes under discrete data streams to provides a foundational understanding of how learning algorithms handle data arriving in distinct, separate chunks. 
This framework simplifies the complexity by allowing us to focus on key principles of sequential decision-making such as incremental learning.
By developing a theory in a discrete context, we can potentially employ these insights that can prove to be crucial for tackling the more complex scenarios of learning under continuous data streams. 
In Appendix \ref{appendix: pattern-classes-blind-prediction}, we develop a complete theory on realizable learning of pattern classes in the blind-prediction setting under discrete streams.

\newpage

\bibliography{learning,learning_2}
\bibliographystyle{unsrtnat}

\newpage

\appendix

\section{Proofs for Learning from Continuous Data Streams}

\subsection{Proof of Theorem \ref{thoerem: u-or-d_unlearnable}}
\label{appendix: proof_theorem_u-and-d-unlearnable}

\begin{proof}
    Let $\mathcal{A}$ be a learning algorithm with a linear querying strategy $Q_{\mathcal{A}}(t)$.
    Assume some concept class $H$ with $LD(H) = \infty$.
    For every $n \in \mathbb{N}$, we show that there exists an adversarially constructed data stream, $(Z_t)_{t \geq 0}$, such that $MB_{\mathcal{P}(H)}(\mathcal{A}, (Z_t)_{t \geq 0}) \geq n$. 
    Since this holds $\forall n \in \mathbb{N}$, then $MB_{\mathcal{P}(H)}(\mathcal{A}) =\sup_{(Z_t)_{t \geq 0}} MB_{\mathcal{P}(H)}(\mathcal{A}, (Z_t)_{t \geq 0}) = \infty$.

    Our learning model assumes an oblivious adversary, so we will construct a continuous data stream $(Z_t)_{t \geq 0}$ beforehand that is realizable with respect to $H$. 
    However, we construct $(Z_t)_{t \geq 0}$ in a randomized fashion and bound the minimum expected error of this randomly constructed process. 
    Then, at the end of the proof, we will call upon the probabilistic method to show that there exists a continuous process achieving at least that expected error.

    Take some $n \in \mathbb{N}$ and let $Q_{\mathcal{A}}(4n) = k$. 
    Since $LD(H)  = \infty$, then there must exist a Littlestone tree $T$ where the minimum root-to-leaf path of $T$ is at least $2k$. 
    Consider a random walk in $T$ starting at the root node that picks with probability $1/2$ the left child or the right child and descends level-by-level until it reaches a leaf node.
    Let $\sigma = \{(x_1, y_1), ..., (x_{2k}, y_{2k})\}$ correspond to the root-to-leaf path produced by the random walk on $T$.
    Note that $H_{\sigma} = \{h \in H: \forall (X_i, Y_i) \in \sigma, h(X_i) = Y_i  \}$, the subset of the concept class consistent with the sequence $\sigma$, will have at least one classifier due to the guarantee provided by the Littlestone tree that every branch in $T$ is realizable by some $h \in H$.

    Now, we describe the construction of the continuous process $(Z_t)_{t \geq 0}$ using the sequence $\sigma$.
    Decompose $[0, 4n)$ in the following way: $[0, 4n) = \bigcup_{j=1}^{2k} \left[ \frac{2n}{k}(j-1), \frac{2n}{k}j \right) $. 
    For the $j^{th}$ interval where $1 \leq j \leq 2k$, $\forall t \in [\frac{2n}{k}(j-1), \frac{2n}{k}j)$, define $Z_t = (X_j, Y_j) = \sigma(j)$.
    On the time interval $[0, 4n)$, if one segments the process $(Z_t)_{t \geq 0}$ into intervals of size $\frac{2n}{k}$, then for the $j^{th}$ interval, where $1 \leq j \leq 2k$, the process contains the point $\sigma(j)$ for the entirety of the interval.  
    For $t \geq 4n$, then define $Z_t = (X_t, Y_t)$ to be a point-label pair $(X', Y')$ such that for each $h \in H_{\sigma}, h(X') = Y'$. 

    Now, we show that on the intervals $\mathcal{A}$ does not query in the time period $[0, 4n)$, the minimum expected error is equal to $\frac{n}{k}$.
    Let the $j^{th}$ interval be an interval, where $1 \leq j \leq 2k$, where $\mathcal{A}$ does not query.
    Let $E_1 = \{t \in [\frac{2n}{k}(j-1), \frac{2n}{k}j]: \mathcal{A}(X_t) \neq Y_j \}$ and $E_0 = \{t \in [\frac{2n}{k}(j-1), \frac{2n}{k}j]: \mathcal{A}(X_t) \neq Y'_j \}$ where $Y'_j$ is the other label in tree $T$ for the point $X_j$.
    Then, $\mathbb{E}\left[\int_{\frac{2n}{k}(i-1)}^{\frac{2n}{k}i} \mathbbm{1}[\mathcal{A}(X_t) \neq Y_t ]\, dt \right] = \mathbb{E}\left[\mu(E_1) \mathbbm{1}[Y_t = Y_j] + \mu(E_0) \mathbbm{1}[Y_t = Y'_j]\right] = \mathbb{E}[\mu(E_1)] \mathbb{E}[\mathbbm{1}[Y_t = Y_j]] + \mathbb{E}[\mu(E_0)]\mathbb{E}[\mathbbm{1}[Y_t = Y'_j]]$ where $\mu$ is the Lebesgue measure.
    Since a random branch was chosen within the tree $T$, there was an equal chance of selecting $Y_t = Y_j$ or $Y_t = Y'_j$, then $\mathbb{E}[\mu(E_1)] \mathbb{E}[\mathbbm{1}[Y_t = Y_j]] + \mathbb{E}[\mu(E_0)]\mathbb{E}[\mathbbm{1}[Y_t = Y'_j]] = \mathbb{E}[\mu(E_0)]/2 + \mathbb{E}[\mu(E_1)]/2 = \mathbb{E}[\mu(E_0) + \mu(E_1)]/2 \geq n/k$.
    Therefore, the learner $\mathcal{A}$ accumulates an expected error of $n/k$ on each interval it doesn't query.
    Since the learner has only $k$ queries, it can only query in at most $k$ of the $2k$ intervals. 
    At minimum there will exist $k$ intervals that haven't been queried by the learner.
    Let $I_1, ..., I_k$ represent $k$ of these intervals algorithm $\mathcal{A}$ does not query.
    It follows that $\mathbb{E}\left[ \int_{0}^{4n} \mathbbm{1}[\mathcal{A}(X_t) \neq Y_t] \, dt \right] \geq \sum_{i=1}^k \mathbb{E}\left[ \int_{I_i} \mathbbm{1}[\mathcal{A}(X_t) \neq Y_t] \, dt \right] = n$.

    The strategy chosen to prove a lower bound on the expected error to be $n$ relied on generating a continuous time process by randomly selecting a branch within the Littlestone tree $T$ which corresponds to a random selection of a target concept. 
    However, we appeal to the probabilistic method to show that if the expected error for algorithm $\mathcal{A}$ is at least $n$, then there exists a fixed choice of a continuous-process $(Z_t)_{t \geq 0}$ such that $MB_{\mathcal{P}(H)}(\mathcal{A}, (Z_t)_{t \geq 0}) \geq n$.
    Since we show that for every $n \in \mathbb{N}$ and any learning algorithm $\mathcal{A}$ there exists an adversarial strategy $(Z_t)_{t \geq 0}$ such that $MB_{\mathcal{P}(H)}(\mathcal{A}, (Z_t)_{t \geq 0}) \geq n$, then the adversary can force the learner to make an arbitrarily large error implying that $MB_{\mathcal{P}(H)}(\mathcal{A}) = \sup_{(Z_t)_{t \geq 0}} MB_{\mathcal{P}(H)}(\mathcal{A}, (Z_t)_{t \geq 0}) =  \infty$ so $H$ is not learnable.
\end{proof}

\subsection{Proof of Theorem \ref{theorem: predict-then-query-example}}
\label{appendix: proof_theorem_predict-then-query-example}

\begin{proof}
    The essence of this proof lies in the simple yet effective scheme the adversary can employ to force any learning algorithm $\mathcal{A}$ with a linear querying strategy $Q_{\mathcal{A}}(t)$ to have $MB_{\mathcal{P}(H)}(\mathcal{A}) = \infty$.
    The idea behind this adversarial approach is to divide the timeline, $\mathbb{R}_{\geq 0}$, into small enough intervals where each interval is populated randomly with point-label pair $(x_1, 0)$ or $(x_2, 1)$ such that $\mathcal{A}$ is forced to guess the right label.

    We first describe the construction of the continuous data stream $(Z_t)_{t \geq 0}$ realizable with respect to $h$.
    As previously done in Theorem \ref{thoerem: u-or-d_unlearnable}, we use a randomized approach in constructing $(Z_t)_{t \geq 0}$ to prove a bound on the expected error.
    This randomized approach draws upon a family of continuous processes to show that the expected error is some minimum value.
    However, the mistake-bounds require a singular continuous process to yield that error.
    So, we apply the probabilistic method to prove the existence of a continuous data stream that can be fixed beforehand that achieves at least that expected error.

    We first describe the construction of $(Z_t)_{t \geq 0}$ in a randomized fashion.
    Decompose $\mathbb{R}_{\geq 0} = \cup_{n=1}^{\infty} [n-1, n)$.
    For every $n \in \mathbb{N}$, let $k_n = Q_{\mathcal{A}}(n)$.
    Then, split $[n-1, n)$ into $2k_n$ intervals each of size $\frac{1}{2k_n}$ such that $[n-1, n) = \bigcup_{j=1}^{2k_n} \left[n-1 + \frac{1}{2k_n}(j-1), n-1 + \frac{1}{2k_n}j \right)$.
    Let $\sigma \sim \mathrm{Unif}(\{ (x_1, 0), (x_2, 1) \}^{2k_n})$ be a sequence sampled uniformly from the space $\{ (x_1, 0), (x_2, 1) \}^{2k_n}$.
    Then, construct the process $(Z_t)_{t \geq 0}$ such that $\forall n \in \mathbb{N}, \sigma \sim \mathrm{Unif}(\{ (x_1, 0), (x_2, 1) \}^{2k_n}), \forall j \in \{1, ..., 2k_n\}, \forall t \in [n-1 + \frac{1}{2k_n}(j-1), n - 1 + \frac{1}{2k_n}j )$ then $Z_t = (X_j, Y_j) = \sigma(j)$.
    Essentially, we assign the point-label pairs $(x_1, 0)$ and $(x_2, 1)$ randomly to each interval to construct the process.

    Letting $n \in \mathbb{N}$, then algorithm $\mathcal{A}$ makes at most $k_n$ queries in the interval $[n-1, n)$ implying at least $k_n$ of the intervals within $[n-1, n)$ pass by the learner with no query.
    On the intervals the learner does not query, we show that the learner's expected error is equal to $\frac{1}{4k_n}$.
    For some $1\leq j \leq 2k_n$, let the $j^{th}$ interval within $[n-1, n)$ contain no queries from $\mathcal{A}$.
    Then, let $E_0$ and $E_1$ represent the portion of the $j^{th}$ interval that $\mathcal{A}$ predicts as a $0$ or $1$ respectively.
    It follows that the expected error of algorithm $\mathcal{A}$ on this interval is equivalent to $\mathbb{E}[\mu(E_1)\mathbbm{1}[Y_j = 0] + \mu(E_0)\mathbbm{1}[Y_j = 1]]$ where $Y_j$ is the true label for the $j^{th}$ interval and $\mu$ is the Lebesgue measure.
    Since there's an equal probability of $Y_j = 0$ or $Y_j = 1$, the expected error $\mathbb{E}[\mu(E_1)\mathbbm{1}[Y_j = 0] + \mu(E_0)\mathbbm{1}[Y_j = 1]] = \frac{1}{4k_n}$.
    There are at least $k_n$ such intervals where $\mathcal{A}$ does not query on $[n-1, n)$ which implies that the minimum expected error is equivalent to $\frac{1}{4}$.

    Since we decomposed $\mathbb{R}_{\geq 0} = \bigcup_{n=1}^{\infty} [n-1, n)$, then $\lim_{T \rightarrow \infty} \mathbb{E}\left[ \int_0^T \mathbbm{1}[\mathcal{A}(X_t) \neq Y_t] \, dt \right] \geq \lim_{T \rightarrow \infty} \sum_{n=1}^{\lfloor T \rfloor} \mathbb{E} \left[ \int_{n-1}^n \mathbbm{1}[\mathcal{A}(X_t) \neq Y_t]\, dt \right] \geq \lim_{T \rightarrow \infty} \sum_{n=1}^{\lfloor T \rfloor} \frac{1}{4} = \infty $.
    We used a randomized method to construct a family of continuous processes such that the expected error of a randomly chosen process reaches $\infty$.
    Applying the probabilistic method, there exists a continuous process whose expected error also reaches $\infty$.
    Therefore, $MB_{\mathcal{P}(H)}(\mathcal{A}) = \sup_{(Z_t)_{t \geq 0}} MB_{\mathcal{P}(H)}(\mathcal{A}, (Z_t)_{t \geq 0}) = \infty$ for any learning algorithm $\mathcal{A}$ with a linear
    querying strategy $Q_{\mathcal{A}}(t)$ so $H$ is unlearnable. 
\end{proof}

\section{Learnability of Pattern Classes from Discrete Data Streams}
\label{appendix: pattern-classes-blind-prediction}

\subsection{Query-based Feedback Online Learning}
\label{section: query-feedback-online-learning}
In this section, we are interested in developing a theory of realizable learning of pattern classes under discrete data streams in the blind-prediction setting.
While a general theory of pattern classes under discrete data streams would involve considering the agnostic case as well, we focus on the simplest scenario which is realizable learning under a binary label space $\mathcal{Y} = \{ 0, 1\}$ assuming deterministic learning algorithms.
While this learning setting might seem quite restrictive, no such theory exists for the learnability of general pattern classes so we provide the first results in this space. 
We also develop this theory in the blind-prediction setting and a future direction of this work would be to characterize it in the update-and-deploy setting.

In the blind-prediction setting, assume a non-empty discrete pattern class $\mathcal{P}$ and some budget of queries $Q$.
Assume a deterministic learning algorithm.
One full round in this setting occurs in the following fashion at every $t \in \mathbb{N}$:
\begin{enumerate}
    \item The learner makes a prediction $\hat{Y}_t \in \mathcal{Y}$ and decides to query or not.
    \item The learner reveals $\hat{Y}_t$.
    \item The adversary selects the true label $Y_t$.
    \item If the learner does query, then the adversary reveals $(X_t, Y_t)$ to the learner.
\end{enumerate}
The primary constraints governing this setting are realizability with respect to $\mathcal{P}$.
Letting $(Z_t)_{t=1}^{\infty} = (X_t, Y_t)_{t=1}^{\infty}$ be the sequence of data points and true labels, then $(Z_t)_{t = 1}^{\infty} \in \mathcal{P}$ to be considered realizable.
It's important to note that the learner at any given time $t$ makes a prediction $\hat{Y}_t$ based solely on the current timestamp and history of previous queries. 

\subsection{Query-based Mistake-Bounds}
\label{section: query_mistake_bounds}

We turn to mistake-bounds to effectively capture the minimum number of mistakes an optimal deterministic learning algorithm will make. 

The number of mistakes a deterministic learning algorithm $\mathcal{A}$ makes given a target pattern/discrete data stream $P^* = (Z_t)_{t=1}^{\infty} \in \mathcal{P}$ and a budget of $Q$ queries is denoted as $M_Q(\mathcal{A}, P^*)$.
Formally, $M_Q(\mathcal{A}, P^*)$ can be understood as 
\begin{align*}
    M_Q(\mathcal{A}, P^*) = \sum_{i = 1}^{\infty} \mathbbm{1}[\mathcal{A}(X_t) \neq Y_t].
\end{align*}
To consider the mistake-bound of the learning algorithm $\mathcal{A}$ on $\mathcal{P}$ given a budget of $Q$ queries, we get

\begin{align*}
    M_Q(\mathcal{A}, \mathcal{P}) = \sup_{P^* \in \mathcal{P}} M_Q(\mathcal{A}, P^*).
\end{align*}
Finally, to obtain the optimal mistake-bound on $\mathcal{P}$ given $Q$ queries, we take the infimum over all deterministic learning algorithms.

\begin{align*}
    M_Q(\mathcal{P}) = \inf_{\mathcal{A}} M_Q(\mathcal{A}, \mathcal{P})
\end{align*}
\subsubsection{Query Trees}

In this section, we describe a certain type of tree, the \textit{query tree}, which we will be used to capture the learning framework explained in Section \ref{section: query-feedback-online-learning}.
Given the budget of queries $Q$ as an input, the query tree can be used to depict the evolution of the game in the blind-prediction setting.
The name query tree comes from the fact that these trees describe the evolution of the game from the learner's perspective through queries.
As a result, these trees are used in conjunction to provide upper and lower bounds on the optimal mistake-bounds for deterministic learning algorithms.

\begin{definition}[Query Tree]
    \label{definition: query-tree}
    A \textit{query tree} $T$ is defined as a tuple $(\mathcal{V}, E, Q)$ where $\mathcal{V}$ is a collection of nodes, $E$ is a collection of edges, and $Q$ is the query budget.

    \begin{itemize}
        \item $T$ is a rooted binary tree where each node has at most two children which are referred to as the \textit{left child} and the \textit{right child}.
        
        \item Each node $V_j \in \mathcal{V}$ corresponds to some timestamp $t_i$ where $V^t_j = t_i$.
        
        \item The root node is represented by $V_1 \in \mathcal{V}$ and corresponds to $V_1^t = t_1$ where $t_1$ is the timestamp of the first query. 
        
        \item $\forall V \in \mathcal{V}$, if $V' = \mathrm{Parent}(V)$, then $V^{'t} < V^t$.
        
        \item $\forall e \in E$ where $e = (V', V)$ with $V' = \mathrm{Parent}(V)$, then the 
        edge weight is defined as $\omega(e) = y$ with $y = 0$ if $V = \mathrm{LeftChild}(V')$ or $y = 1$ if $V = \mathrm{RightChild}(V')$. 

        \item Every root-to-leaf path $(V_1, ..., V_n)$ has $n = Q + 1$ nodes.
        
    \end{itemize}
    
\end{definition}

\subsubsection{Query Learning Distance - Blind-Prediction Setting}

In this section, we describe a dimension based on a family of query trees that correctly characterizes the complexity of learning pattern classes in the blind-prediction setting.
In the following subsections, we first tackle the case when $Q = 0$ and then characterize the general setting for $Q > 0$.
\paragraph{Special Variant: Blind Learning}
\label{section: blind_learning}
What if the learner wasn't allowed to even query once?
How would this affect the number of mistakes the learner would make?
We call this special setting the blind learning scenario since the learner receives absolutely no feedback on any round of the game; only the current time-step is given as input.
While Section \ref{section: query-feedback-online-learning} describes the exact procedure round-by-round, a closer look at the intricacies of this scenario can reduce the game into a simple two-step procedure.
The learner is not allowed to query a single time; so this implies that the learner cannot use information about the sequence itself to update its algorithm.
Additionally, since the learner is a deterministic learning algorithm, this implies that an all-knowing adversary has complete knowledge about the learner's prediction at every timestamp. 
Combining these two facts together, it follows that the learner's predictions are independent of the true labels and the adversary can simply select the sequence of true labels all at once.
Formally speaking, the entire game can be described in the following two steps:

\begin{enumerate}
    \item The learner selects a prediction vector $\hat{\mathbf{y}} \in \{0, 1\}^{\infty}$.
    \item The adversary selects the true outcome vector $\mathbf{y} \in \{0, 1\}^{\infty}$.
\end{enumerate}
Then, the number of mistakes is equivalent to $\sum_{i=1}^{\infty} \mathbbm{1}[\hat{\mathbf{y}}(i) \neq \mathbf{y}(i) ] = |\hat{\mathbf{y}} - \mathbf{y}|$ where $|\cdot|$ stands for the L1-norm. 
As is consistent with the framework described in Section \ref{section: query-feedback-online-learning}, the vector $\mathbf{y}$ must be realizable with respect to the pattern class $\mathcal{P}$ such that $\exists P \in \mathcal{P}$ where $\forall (X_t, Y_t) \in \mathcal{P}$, $Y_t = \mathbf{y}(t)$.
Below, we present an important lemma that characterizes the optimal mistake-bound $M_0(\mathcal{P})$.

\begin{lemma}
\label{lemma: blind_learning_predict-query}
If the number of queries $Q = 0$, then
\begin{align*}
    M_0(\mathcal{P}) = \mathrm{BlindLearningDimension}(\mathcal{P}) =  \inf_{\hat{\mathbf{y}} \in \{0,1\}^{\infty}} \sup_{\mathbf{y} \in \mathcal{P}^y} |\hat{\mathbf{y}} - \mathbf{y} |
\end{align*}
where $|\cdot|$ represents the L1-distance and $\mathcal{P}^y = \{\mathbf{y} \in \{0, 1\}^{\infty}: \exists P \in \mathcal{P}~ \mathrm{s.t.}~ \forall (X_t, Y_t) \in P, Y_t = \mathbf{y}(t) \}$ which represents the set of infinite binary vectors that are realizable with respect to $\mathcal{P}$.
\end{lemma}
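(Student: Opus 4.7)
The plan is to prove the equality by establishing matching upper and lower bounds, both of which follow from the key structural observation already sketched in the paragraph preceding the lemma: when $Q = 0$, the behavior of a deterministic learner is entirely determined before any data is revealed, so the game collapses to the two-step static formulation described in Section \ref{section: blind_learning}.

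For the lower bound $M_0(\mathcal{P}) \geq \mathrm{BlindLearningDimension}(\mathcal{P})$, I would fix an arbitrary deterministic learner $\mathcal{A}$ with query budget $Q = 0$. Per the blind-prediction framework of Section \ref{section: predict-then-query}, $\mathcal{A}$ is a mapping from $(\mathcal{X} \times \mathcal{Y})^* \times \mathcal{D}$ to $\mathcal{Y}$ whose input is the current timestamp together with the query history $Q_{\mathcal{A}}(t)$. Because $Q = 0$, the query history is empty at every timestamp, so $\mathcal{A}$'s prediction depends solely on $t$. Determinism then guarantees a well-defined vector $\hat{\mathbf{y}}_{\mathcal{A}} \in \{0,1\}^{\infty}$ with $\hat{\mathbf{y}}_{\mathcal{A}}(t) = \mathcal{A}(\emptyset, t)$. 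For any realizable stream $P^* = (X_t, Y_t)_{t=1}^{\infty} \in \mathcal{P}$ with label vector $\mathbf{y} \in \mathcal{P}^y$, the mistake definition of Section \ref{section: query_mistake_bounds} gives $M_0(\mathcal{A}, P^*) = \sum_{t=1}^{\infty} \mathbbm{1}[\hat{\mathbf{y}}_{\mathcal{A}}(t) \neq \mathbf{y}(t)] = |\hat{\mathbf{y}}_{\mathcal{A}} - \mathbf{y}|$. Taking the supremum over $P^* \in \mathcal{P}$ and then the infimum over $\mathcal{A}$, and noting that the supremum over patterns only depends on the induced label vector, yields $M_0(\mathcal{P}) \geq \inf_{\hat{\mathbf{y}}} \sup_{\mathbf{y} \in \mathcal{P}^y} |\hat{\mathbf{y}} - \mathbf{y}|$.

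For the upper bound $M_0(\mathcal{P}) \leq \mathrm{BlindLearningDimension}(\mathcal{P})$, I would exhibit, for each $\hat{\mathbf{y}} \in \{0,1\}^{\infty}$, a concrete deterministic learner $\mathcal{A}_{\hat{\mathbf{y}}}$ which never queries and always predicts $\hat{\mathbf{y}}(t)$ at time $t$. This learner is well-defined (its query budget of $0$ is respected), and by the same accounting as above its mistake count on any realizable stream with label vector $\mathbf{y}$ equals $|\hat{\mathbf{y}} - \mathbf{y}|$. Therefore $M_0(\mathcal{A}_{\hat{\mathbf{y}}}, \mathcal{P}) = \sup_{\mathbf{y} \in \mathcal{P}^y} |\hat{\mathbf{y}} - \mathbf{y}|$, and taking the infimum over $\hat{\mathbf{y}}$ establishes the bound.

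The hardest part is not really technical but definitional: one must verify that the inf/sup over deterministic learners in the mistake-bound definition coincides exactly with the inf/sup over prediction vectors in $\{0,1\}^{\infty}$. This is a two-sided correspondence $\mathcal{A} \leftrightarrow \hat{\mathbf{y}}_{\mathcal{A}}$ that is bijective onto $\{0,1\}^{\infty}$ precisely because $Q = 0$ eliminates every source of contingent behavior (there are no queries, no observed $X_t$'s, and no randomness), and the realizability constraint on the adversary is correctly captured by restricting the supremum to $\mathcal{P}^y$ rather than all of $\{0,1\}^{\infty}$. Once this correspondence is stated carefully, the equality $M_0(\mathcal{P}) = \inf_{\hat{\mathbf{y}}} \sup_{\mathbf{y} \in \mathcal{P}^y} |\hat{\mathbf{y}} - \mathbf{y}|$ is immediate.
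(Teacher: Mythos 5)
Your proposal is correct and follows essentially the same route as the paper: both directions rest on the observation that a deterministic zero-query learner is exactly a fixed prediction vector in $\{0,1\}^{\infty}$, giving the lower bound by reading off $\hat{\mathbf{y}}_{\mathcal{A}}$ from an arbitrary learner and the upper bound by instantiating a learner from a prediction vector. Your upper bound is in fact marginally cleaner, since by taking the infimum over the whole family $\{\mathcal{A}_{\hat{\mathbf{y}}}\}$ you sidestep the paper's (lightly justified) claim that the infimum over $\{0,1\}^{\infty}$ is attained by some particular $\hat{\mathbf{y}}$.
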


\begin{proof}
    We divide this proof into two parts by devoting the first half to a lower bound proof showing that $\mathrm{BlindLearningDimension}(\mathcal{P}) \leq M_0(\mathcal{P})$. The second half of the proof is devoted to showing that there exists an algorithm $\mathcal{A}$ such that $M_0(\mathcal{A}, \mathcal{P}) = \mathrm{BlindLearningDimension}(\mathcal{P})$. Finally, we combine these two statements to ultimately show that $\mathrm{BlindLearningDimension}(\mathcal{P}) = M_0(\mathcal{P})$.

    We first show the lower-bound proof by letting $\mathcal{A}$ be any deterministic learning algorithm.
    Then, $M_0(\mathcal{A}, P)$ is the mistake-bound of the learner $\mathcal{A}$ given the pattern $P \in \mathcal{P}$.
    Let $\mathbf{y}'$ be the output of $\mathcal{A}$ given no queries. 
    This is equivalent to $M_0(\mathcal{A}, P) = |\mathbf{y}' - \mathbf{y}|$ where $(\mathbf{x}, \mathbf{y}) = P$.
    $M_0(\mathcal{A}, \mathcal{P})$ is the maximum mistake-bound of the learning algorithm $\mathcal{A}$ over the entire pattern class $\mathcal{P}$.
    More technically, we represent $M_0(\mathcal{A}, \mathcal{P}) = \sup_{\mathbf{y} \in \mathcal{P}^y } |\mathbf{y}' - \mathbf{y}|$.
    Since $\mathbf{y}' \in \{0, 1 \}^{\infty}$, it then holds that $\inf_{\hat{\mathbf{y}} \in \{0, 1\}^{\infty} } \sup_{\mathbf{y} \in \mathcal{P}^y} |\hat{\mathbf{y}} - \mathbf{y}| \leq  \sup_{ \mathbf{y} \in \mathcal{P}^y } |\mathbf{y}' - \mathbf{y}|$.
    It follows that $\mathrm{BlindLearningDimension}(\mathcal{P}) \leq M_0(\mathcal{A}, \mathcal{P})$.
    Since $\mathcal{A}$ was an arbitrary deterministic learning algorithm, it then follows that $\mathrm{BlindLearningDimension}(\mathcal{P}) \leq M_0(\mathcal{P})$.

    For the upper bound proof we focus on the case when $\mathrm{BlindLearningDimension}(\mathcal{P}) < \infty$ since $M_0(\mathcal{P}) = \infty$ when $\mathrm{BlindLearningDimension}(\mathcal{P}) = \infty$. 
    Let $\mathcal{A}$ be a deterministic learning algorithm that predicts the vector $\hat{\mathbf{y}}$ such that $\sup_{\mathbf{y} \in \mathcal{P}^y} |\hat{\mathbf{y}} - \mathbf{y} | = \inf_{\hat{\mathbf{y}} \in \{0,1\}^{\infty}} \sup_{\mathbf{y} \in \mathcal{P}^y} |\hat{\mathbf{y}} - \mathbf{y}|$.
    Since $\mathrm{BlindLearningDimension}(\mathcal{P}) < \infty$ and $\mathbf{y}$ is a binary vector, then there exists a vector $\hat{\mathbf{y}}$ achieving the minimum.
    It directly follows that $M_0(\mathcal{A}, \mathcal{P}) = \mathrm{BlindLearningDimension}(\mathcal{P})$.
    Since $M_0(\mathcal{P}) \leq M_0(\mathcal{A}, \mathcal{P})$, then $M_0(\mathcal{P}) \leq \mathrm{BlindLearningDimension}(\mathcal{P})$. 
    By combining the lower bound and upper bound statements, we get the following inequality $\mathrm{BlindLearningDimension}(\mathcal{P}) \leq M_0(\mathcal{P}) \leq \mathrm{BlindLearningDimension}(\mathcal{P})$ so $M_0(\mathcal{P}) = \mathrm{BlindLearningDimension}(\mathcal{P})$.

\end{proof}

\paragraph{General Setting}
We now define the dimension $QLD$ or query learning distance on these family of query trees $\mathcal{T}$ realizable with respect to the discrete pattern class $\mathcal{P}$ given $Q$ queries.
The $QLD$ quantity can be thought as analogous to the notion of rank of a binary tree but setup in a slightly different fashion.
Below, for each $T \in \mathcal{T}$, we describe the query learning distance.

\begin{multline}
    \label{eq: QLD_piecewise}
    QLD_T(\mathcal{P}, Q, i) = \mathbbm{1}[i = Q] \cdot \mathrm{BlindLearningDimension(\mathcal{P})} + \mathbbm{1}[i < Q] \Bigg(\inf_{\hat{\mathbf{y}} \in \{0, 1\}^{t_i - (t_{i-1} + 1)} } \\ \sup_{ \substack{x_{t_i} \in \mathcal{X} \\ \mathbf{y} \in \{0, 1 \}^{t_i - (t_{i-1} + 1)}}  } |\hat{\mathbf{y}} - \mathbf{y} |\cdot \mathbbm{1}[\mathcal{P}_{(\star, \mathbf{y})} \neq \emptyset]  + \begin{cases}
        \max\{T_0, T_1 \} & \text{if $T_0 \neq T_1$} \\
        T_0 + 1 & \text{else}
    \end{cases} \Bigg)
\end{multline}
\begin{align}
    \label{eq: QLD_L_subtree}
    \text{where} && T_0 &= QLD_{T_L}(\mathcal{P}_{(\star, \mathbf{y})(x_{t_i}, 0)}, Q, i + 1) && \\
    && T_1 &= QLD_{T_R}(\mathcal{P}_{(\star, \mathbf{y})(x_{t_i}, 1)}, Q, i + 1). &&
    \label{eq: QLD_R_subtree}
\end{align}

In Eqs. \ref{eq: QLD_piecewise}, \ref{eq: QLD_L_subtree}, and \ref{eq: QLD_R_subtree}, $T_L$ is the left subtree of $T$, $T_R$ is the right subtree of $T$, $\hat{\mathbf{y}}$ is the sequence of predictions, $\mathbf{y}$ is the sequence of true labels, $x_{t_i}$ is the instance at time $t_i$, and $\mathcal{P}_{(\star, \mathbf{y})} = \{ P \in \mathcal{P}: \forall y_t \in \mathbf{y}, P^y(t) = y_t \}$ with $P^y(t)$ referring to the $t^{th}$ label of pattern $P$.
Additionally, $t_i$ and $t_{i-1}$ refer to the timestamps of the $i^{th}$ and $i - 1^{th}$ queries respectively that correspond to the root-to-leaf path dictated by the recursion.

\paragraph{Defining $\mathbf{QLD(\mathcal{P}, Q)}$}
Let $\mathcal{T}(\mathcal{P}, Q)$ be the collection of all query trees for the predict-then-query setting that are realizable with respect to $\mathcal{P}$ and contains $Q$ query nodes on each branch.
Then, $\mathcal{T}^k(\mathcal{P}, Q) = \{T \in \mathcal{T}(\mathcal{P}, Q): QLD_T(\mathcal{P}, Q, 0) = k\}$ is the collection of trees whose query learning distance is exactly $k$.
Finally, we define $QLD(\mathcal{P}, Q) = \inf\{k \in \mathbb{N} \cup \{0\}: \mathcal{T}^k(\mathcal{P}, Q) \neq \emptyset \}$.

\begin{lemma}
    \label{lemma: qld_lower_bound}
    For any discrete pattern class $\mathcal{P}$ and query budget $Q \in \mathbb{N} \cup \{ 0 \} $, $QLD(\mathcal{P}, Q) \leq M_Q(\mathcal{P})$.
\end{lemma}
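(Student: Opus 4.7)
The plan is an adversary-driven construction: given any deterministic learning algorithm $\mathcal{A}$, I build a specific query tree $T_{\mathcal{A}} \in \mathcal{T}(\mathcal{P}, Q)$ adapted to $\mathcal{A}$'s querying schedule, and show that $QLD_{T_{\mathcal{A}}}(\mathcal{P}, Q, 0) \leq M_Q(\mathcal{A}, \mathcal{P})$. Since $QLD(\mathcal{P}, Q) \leq QLD_{T_{\mathcal{A}}}(\mathcal{P}, Q, 0)$ by definition of the infimum over trees, taking the infimum over $\mathcal{A}$ on the right then yields $QLD(\mathcal{P}, Q) \leq M_Q(\mathcal{P})$. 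The tree $T_{\mathcal{A}}$ is built level by level: the root carries the timestamp of $\mathcal{A}$'s first query, which is deterministic, and inductively any node reached along a branch whose edge weights record the hypothetical responses $(y_{t_1}, \ldots, y_{t_i})$ is assigned the time $\mathcal{A}$ would next query after exactly that history (together with adversary-chosen query instances, which I commit to along the path). Strict monotonicity of timestamps and the $Q+1$-node branch length then follow from $\mathcal{A}$'s query budget, so $T_{\mathcal{A}}$ satisfies Definition \ref{definition: query-tree}.

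The main step is a reverse induction on $i$ establishing the claim: if the query history along the path to a node at index $i$ restricts the class to $\mathcal{P}'$, then the adversary can force $\mathcal{A}$ to make at least $QLD_{T_{\mathrm{sub}}}(\mathcal{P}', Q, i)$ more mistakes on the remaining stream, where $T_{\mathrm{sub}}$ is the subtree rooted at that node. The base case $i = Q$ is immediate from Lemma \ref{lemma: blind_learning_predict-query} applied to $\mathcal{P}'$: no queries remain, so $\mathcal{A}$'s mistakes on the infinite suffix are at least $\mathrm{BlindLearningDimension}(\mathcal{P}')$. For the step at $i < Q$, the adversary plays three coordinated moves: on the block strictly preceding $t_i$ it picks labels $\mathbf{y}$ and a consistent instance trace to maximize $|\hat{\mathbf{y}}_{\mathrm{block}} - \mathbf{y}|$ against $\mathcal{A}$'s fixed predictions, which is at least the $\inf_{\hat{\mathbf{y}}}\sup_{\mathbf{y}}$ appearing in Equation \eqref{eq: QLD_piecewise} because a specific $\hat{\mathbf{y}}$ only weakens the inner optimum; at $t_i$ it picks $x_{t_i}$ and $y_{t_i}$ maximizing the sum of the mistake at $t_i$ and the value of the chosen subtree, thereby realizing the $\max\{T_0, T_1\}$ versus $T_0 + 1$ dichotomy, since when $T_0 = T_1$ the adversary can pick $y_{t_i} \neq \hat{y}_{t_i}$ and still enter a subtree of value $T_0$, netting one extra mistake; within that subtree it applies the inductive hypothesis to $\mathcal{P}_{(\star,\mathbf{y})(x_{t_i}, y_{t_i})}$.

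The main technical obstacle is the realizability bookkeeping: I must verify that the adversary's composite choices of block labels, query-point instance, and subtree extension always compose into a single pattern in $\mathcal{P}$, so that the indicator $\mathbbm{1}[\mathcal{P}_{(\star,\mathbf{y})} \neq \emptyset]$ in the QLD recursion precisely gates the adversary's admissible moves at each level. This should follow by tracking the restricted class $\mathcal{P}_{(\star,\mathbf{y})(x_{t_i}, y_{t_i})}$ along the branch and using that non-emptiness at one level guarantees the existence of admissible completions at the next; branches along which the class becomes empty are exactly those the adversary cannot pursue, which the indicator correctly disables. Once this bookkeeping is in place, the induction closes at the root with $QLD_{T_{\mathcal{A}}}(\mathcal{P}, Q, 0) \leq M_Q(\mathcal{A}, \mathcal{P})$, and the lemma follows by infimum over $\mathcal{A}$.
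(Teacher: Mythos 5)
Your proposal is correct and follows essentially the same strategy as the paper's proof: exploit determinism of $\mathcal{A}$ to fix the query timestamps, use Lemma \ref{lemma: blind_learning_predict-query} at the leaves, and induct over the remaining query budget with the adversary maximizing block mistakes against $\mathcal{A}$'s committed predictions and steering into the worse subtree (gaining $+1$ when $T_0 = T_1$). The only difference is packaging — you explicitly build $T_{\mathcal{A}}$ from $\mathcal{A}$'s schedule and run a reverse induction down it, whereas the paper inducts on $Q$ and invokes minimality of $QLD(\mathcal{P},Q)$ to relate the adversary's chosen tree to the infimum — but these are interchangeable.
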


\begin{proof}
    Let $\mathcal{A}$ be any deterministic learning algorithm. A proof by induction will be established on the pair $(\mathcal{P}, Q)$ taking $Q = 0$ to be the base. We refer to Lemma \ref{lemma: blind_learning_predict-query} to show that $\forall \mathcal{P}' \subseteq \mathcal{P}, QLD(\mathcal{P}', 0) = M_0(\mathcal{P}') \leq M_0(\mathcal{A}, \mathcal{P}')$.
    Now, we apply the inductive step on $(\mathcal{P}', Q')$ where $\mathcal{P}' \subseteq \mathcal{P}$ and $Q' < Q$, then $QLD(\mathcal{P}', Q') \leq M_{Q'}(\mathcal{A}, \mathcal{P}')$. The rest of the proof is devoted to showing that $QLD(\mathcal{P}, Q) \leq M_Q(\mathcal{A}, \mathcal{P})$ by describing an adversarial strategy that guarantees this bound. 

    Let $t_1 \in \mathbb{N}$ be the first query timestamp made by the learning algorithm $\mathcal{A}$. Since $\mathcal{A}$ is a deterministic learner, the adversary has knowledge of $t_1$.
    To narrow down its selection of the true labels for the first $t_1$ rounds, the adversary can select an optimal query tree $T$ based on the value $QLD_T(\mathcal{P}, Q, 0)$ given that the root node has $V_1^t = t_1$. 
    Given that $QLD_T(\mathcal{P}, Q, 0)$ follows the piece-wise function described in Eq. \ref{eq: QLD_piecewise}, the adversary can select the larger of $T_0$ or $T_1$ (if equal, $T_0$ is chosen). 
    Without loss of generality, let $T_0$ be the subtree chosen by the adversary.
    For the first $t_1 - 1$ rounds, let the adversary selects the optimal vector of true labels $\mathbf{y}$ given knowledge of the procedure of algorithm $\mathcal{A}$.
    Let $\hat{\mathbf{y}}$ represent the vector of predicted labels by the learner for the first $t_1 - 1$ rounds.
    At time $t_1$, the learner will present its prediction $\hat{y}_{t_1}$.
    The adversary can select $x_{t_1}$ that corresponds to the supremum in Eq. \ref{eq: QLD_piecewise} and set the true label $y_{t_1} = 0$.
    In the special case that $T_0 = T_1$, then $y_{t_1} = 1 - \hat{y}_{t_1}$.

    Then the number of mistakes made by the learner in the first $t_1$ rounds is equivalent to $|\hat{\mathbf{y}} - \mathbf{y}| + \mathbbm{1}[y_{t_1} \neq \hat{y}_{t_1}] $. On the remaining number of rounds, $M_{Q - 1}(\mathcal{A}, \mathcal{P}_{(\star, \mathbf{y})(x_{t_1}, y_{t_1}) })$ represents the optimal mistake-bound of the learner $\mathcal{A}$. Using the inductive step, we can show that $|\hat{\mathbf{y}} - \mathbf{y}| + \mathbbm{1}[y_{t_1} \neq \hat{y}_{t_1}] + QLD(\mathcal{P}_{(\star, \mathbf{y})(x_{t_1}, y_{t_1}) }, Q - 1) \leq  |\hat{\mathbf{y}} - \mathbf{y}| + \mathbbm{1}[y_{t_1} \neq \hat{y}_{t_1}] + M_{Q - 1}(\mathcal{A}, \mathcal{P}_{(\star, \mathbf{y})(x_{t_1}, y_{t_1}) })$.
    Since $M_Q(\mathcal{A}, \mathcal{P})$ is calculated as the supremum over all adversarial approaches given the algorithm $\mathcal{A}$, then $M_Q(\mathcal{A}, \mathcal{P}) \geq |\hat{\mathbf{y}} - \mathbf{y}| + \mathbbm{1}[y_{t_1} \neq \hat{y}_{t_1}] + M_{Q - 1}(\mathcal{A}, \mathcal{P}_{(\star, \mathbf{y})(y_{t_1}, y_{t_1}) }) $.
    Now, we show that $QLD(\mathcal{P}, Q) \leq |\hat{\mathbf{y}} - \mathbf{y}| + \mathbbm{1}[y_{t_1} \neq \hat{y}_{t_1}] + QLD(\mathcal{P}_{(\star, \mathbf{y})(x_{t_1}, y_{t_1}) }, Q - 1)$. 
    Assume that the predictions made by algorithm $\mathcal{A}$ induce $|\hat{\mathbf{y}} - \mathbf{y}| + \mathbbm{1}[y_{t_1} \neq \hat{y}_{t_1}] + QLD(\mathcal{P}_{(\star, \mathbf{y})(x_{t_1}, y_{t_1}) }, Q - 1) < QLD(\mathcal{P}, Q)$.
    Since the adversary's selection of true labels is an optimal label vector given the workings of learning algorithm $\mathcal{A}$, then the adversary's decision aligns with the supremum in Eq. \ref{eq: QLD_piecewise}.
    Then, there must exist a tree $T'$ whose largest distance is equal to that value with $t_1$ being the timestamp of the root node.
    Formally speaking, this implies the existence of $T'$ such that $QLD_{T'}(\mathcal{P}, Q, 0) < QLD(\mathcal{P}, Q)$.
    If such a tree existed, then the adversary would have selected $T'$ which violates the minimality of $QLD(\mathcal{P}, Q)$ and the assumption that the adversary chose the most minimal tree satisfying $V_1^t = t_1$.
    As a result, $QLD(\mathcal{P}, Q) \leq |\hat{\mathbf{y}} - \mathbf{y}| + \mathbbm{1}[y_{t_1} \neq \hat{y}_{t_1}] + QLD(\mathcal{P}_{(\star, \mathbf{y})(x_{t_1}, y_{t_1}) }, Q - 1)$.
    Placing all the inequalities together, we get $QLD(\mathcal{P}, Q) \leq |\hat{\mathbf{y}} - \mathbf{y}| + \mathbbm{1}[y_{t_1} \neq \hat{y}_{t_1}] + QLD(\mathcal{P}_{(\star, \mathbf{y})(x_{t_1}, y_{t_1}) }, Q - 1) \leq |\hat{\mathbf{y}} - \mathbf{y}| + \mathbbm{1}[y_{t_1} \neq \hat{y}_{t_1}] + M_{Q - 1}(\mathcal{A}, \mathcal{P}_{(\star, \mathbf{y})(x_{t_1}, y_{t_1}) }) \leq M_Q(\mathcal{A}, \mathcal{P})$ which results in $QLD(\mathcal{P}, Q) \leq M_Q(\mathcal{A}, \mathcal{P})$.
    Since $\mathcal{A}$ was an arbitrary learning algorithm, then it holds that $QLD(\mathcal{P}, Q) \leq M_Q( \mathcal{P})$.
\end{proof}

In Algorithm \ref{alg: blind-prediction-soa} detailed below, we denote by $\mathbf{y}_1 \circ \mathbf{y}_2$ the vector obtained by concatenating vector $\mathbf{y}_2$ after vector $\mathbf{y}_1$.
Additionally, the usage of $T_0$ and $T_1$ refer to Eqs. \ref{eq: QLD_L_subtree} and \ref{eq: QLD_R_subtree} respectively. 
If $T_{y_t}$ is used, this implies a query subtree that was either the left child if $y_t = 0$ or the right child if $y_t = 1$.

\begin{algorithm}
\caption{BP-SOA($\mathcal{P}$, $Q$)}
\label{alg: blind-prediction-soa}
\begin{algorithmic}[1]
\REQUIRE $\mathcal{P} \neq \emptyset$
\REQUIRE $Q \geq 0$
\STATE $\hat{\mathbf{x}}_h =$ history of previously observed instances
\STATE $\hat{\mathbf{y}}_l =$ list of current predictions
\STATE $O = $ history of previous queries \COMMENT{Elements of $O$ are $(t, y)$ where $t$ is time, $y$ is label} 
\STATE $i = 0, t_i = 1$ \COMMENT{Initial query number and timestamp}
\STATE Select tree $T$ such that $QLD(\mathcal{P}, Q, 0) = QLD(\mathcal{P}, Q)$, $t_i = V_1^t$
\STATE $T_{\mathrm{end}} = \infty$
\FOR{$t=1$ to $T_{\mathrm{end}}$}
    \IF{$t < t_i$}
        \IF{$t = 1$}
            \STATE \begin{multline*}
                \hat{\mathbf{y}} = \argmin_{\hat{\mathbf{y}} \in \{0, 1\}^{t_i - 1} }  \sup_{ \mathbf{y} \in \{0, 1 \}^{t_i - 1} } |\hat{\mathbf{y}} - \mathbf{y} |\cdot \mathbbm{1}[\mathcal{P}_{ (\star, \mathbf{y})} \neq \emptyset]  + QLD(\mathcal{P}_{(\star, \mathbf{y})}, Q)
            \end{multline*}
        \STATE Append $\hat{\mathbf{y}}$ to $\hat{\mathbf{y}}_l$
        \ENDIF
        \STATE Predict $\hat{\mathbf{y}}_l(t)$, add $\star$ to $\hat{\mathbf{x}}_h$
    \ELSIF{$i < Q$}
        \STATE $\hat{y}_t = \argmax_{r \in \{0, 1\} } \sup_{x_{t_i} \in \mathcal{X}}  \sup_{ \substack{\mathbf{y} \in \{0, 1 \}^{t_i - 1} \\ \forall (t, y) \in O, \mathbf{y}(t)=y } } |\hat{\mathbf{y}}_l - \mathbf{y}|\mathbbm{1}[\mathcal{P}_{(\hat{\mathbf{x}}_h, \mathbf{y})} \neq \emptyset] +
        QLD_{T_r}(\mathcal{P}_{(\hat{\mathbf{x}}_h, \mathbf{y})(x_{t_i}, r)}, Q, i+1) $
        \STATE Predict $\hat{y}_t$ and add $\hat{y}_t$ to $\hat{\mathbf{y}}_h$
        \STATE Receive $(x_t, y_t)$, add $(t, y_t)$ to $O$, and add $x_t$ to $\hat{\mathbf{x}}_h$
        \STATE Set $i = i + 1$, $t_i = V_1^t$ \COMMENT{$V_1$ is the root node of $T_{y_t}$}
        \STATE \begin{multline*}
            \hat{\mathbf{y}} = \argmin_{\hat{\mathbf{y}} \in \{0, 1\}^{t_i - (t_{i-1} + 1)} }  \sup_{ x_{t_i} \in \mathcal{X}  } \sup_{ \substack{\mathbf{y} \in \{0, 1 \}^{t_i - 1} \\ \forall (t, y) \in O, \mathbf{y}(t)=y } } | \hat{\mathbf{y}}_l \circ \hat{\mathbf{y}} - \mathbf{y} |\cdot \mathbbm{1}[\mathcal{P}_{(\hat{\mathbf{x}}_h \circ \{ \star \} , \mathbf{y})} \neq \emptyset]  + \begin{cases}
            \max\{T_0, T_1 \} & \text{if $T_0 \neq T_1$} \\
            T_0 + 1 & \text{else}
            \end{cases} 
        \end{multline*}
        \STATE Append $\hat{\mathbf{y}}$ to $\hat{\mathbf{y}}_l$
    \ELSE
        \STATE $\mathbf{y}' = \argmax_{\substack{\mathbf{y} \in \{0, 1\}^{t_i} \\ \forall (t, y) \in O, \mathbf{y}(t)=y } } |\hat{\mathbf{y}} - \mathbf{y} | \cdot \mathbbm{1}[\mathcal{P}_{(\hat{\mathbf{x}}_h, \mathbf{y})} \neq \emptyset] + \mathrm{BlindLearningDimension}(\mathcal{P}_{(\hat{\mathbf{x}}_h, \mathbf{y})})$
        \STATE Let $\hat{\mathbf{y}}$ be such that $\sup_{\mathbf{y} \in \mathcal{P}^y_{(\hat{\mathbf{x}}_h, \mathbf{y}')}} |\hat{\mathbf{y}} - \mathbf{y}| = \inf_{\hat{\mathbf{y}} \in \{0, 1 \}^{\infty} } \sup_{ \mathbf{y} \in \mathcal{P}^y_{(\hat{\mathbf{x}}_h, \mathbf{y}')}} |\hat{\mathbf{y}} - \mathbf{y}| $
        \STATE Append $\hat{\mathbf{y}}$ to $\hat{\mathbf{y}}_l$
        \STATE Set $T_{\mathrm{end}} = t_i$
    \ENDIF
\ENDFOR

\FOR{$t=t_i$ to $\infty$}
    \STATE Predict $\hat{\mathbf{y}}_l(t)$
\ENDFOR

\end{algorithmic}
\end{algorithm}

\newpage

\begin{lemma}
    \label{lemma: qld_soa}
    For any discrete pattern class $\mathcal{P}$ and query budget $Q \in \mathbb{N} \cup \{ 0\}$, $M_Q(\mathcal{P}) \leq QLD(\mathcal{P}, Q)$.
\end{lemma}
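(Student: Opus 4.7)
The plan is to prove the bound by strong induction on the query budget $Q$, using Algorithm~\ref{alg: blind-prediction-soa} (BP-SOA) as the witness learner: the query tree $T$ it selects initially serves as the combinatorial certificate that drives prediction, in direct analogy to how the standard SOA uses a maximal mistake tree to guide its choices. For the base case $Q = 0$, BP-SOA reduces to its blind-prediction branch, which predicts the inf-achieving vector $\hat{\mathbf{y}}$ in the definition of $\mathrm{BlindLearningDimension}(\mathcal{P})$; then Lemma~\ref{lemma: blind_learning_predict-query} gives $M_0(\mathcal{P}) = \mathrm{BlindLearningDimension}(\mathcal{P}) = QLD(\mathcal{P}, 0)$ directly.

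For the inductive step, fix $Q \geq 1$ and assume $M_{Q'}(\mathcal{P}') \leq QLD(\mathcal{P}', Q')$ for every $Q' < Q$ and every nonempty $\mathcal{P}' \subseteq \mathcal{P}$. Fix a realizable target $P^* \in \mathcal{P}$ and let $T$ be the initial tree BP-SOA selects so that $QLD_T(\mathcal{P}, Q, 0) = QLD(\mathcal{P}, Q)$; write $t_1 = V_1^t$. I decompose the mistakes of BP-SOA on $P^*$ into three parts: (i) the pre-query rounds $1, \ldots, t_1 - 1$, (ii) the query round at $t_1$, and (iii) all rounds after $t_1$. For part (i), BP-SOA's pre-query predictions form exactly the $\argmin$ vector computed immediately after selecting $T$, which realizes the $\inf_{\hat{\mathbf{y}}}$ in the non-recursive portion of Eq.~\ref{eq: QLD_piecewise} at the root of $T$; so phase-(i) mistakes are at most the inner inf-sup value. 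For part (ii), BP-SOA predicts the label $r^{\star} = \argmax_{r \in \{0,1\}} QLD_{T_r}(\cdot, Q, 1)$, and a short case split shows that $\mathbbm{1}[\hat{y}_{t_1} \neq Y_{t_1}] + QLD_{T_{Y_{t_1}}}$ is bounded by $\max\{T_0, T_1\}$ when $T_0 \neq T_1$ and by $T_0 + 1$ when $T_0 = T_1$, matching the piecewise expression in Eq.~\ref{eq: QLD_piecewise}. For part (iii), BP-SOA continues as BP-SOA on the restricted class $\mathcal{P}' = \mathcal{P}_{(\hat{\mathbf{x}}_h, \mathbf{y}^*)(X_{t_1}, Y_{t_1})}$ with budget $Q - 1$ and witnessing subtree $T_{Y_{t_1}}$, so the inductive hypothesis bounds phase~(iii) by $QLD(\mathcal{P}', Q-1)$. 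Summing the three bounds reconstructs the right-hand side of Eq.~\ref{eq: QLD_piecewise} for $T$, which equals $QLD(\mathcal{P}, Q)$.

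The main obstacle is the bookkeeping that makes these three bounds telescope cleanly into the recurrence for $QLD_T$: one must verify that (a) the restricted class $\mathcal{P}_{(\hat{\mathbf{x}}_h, \mathbf{y}^*)(X_{t_1}, Y_{t_1})}$ carried by the algorithm coincides with the class $\mathcal{P}_{(\star, \mathbf{y})(x_{t_i}, y_{t_i})}$ appearing in the definition of $QLD_T$; (b) the subtree $T_{Y_{t_1}}$ inherited by the recursive call still achieves the optimal $QLD$ value for its class, so that the inequality $QLD(\mathcal{P}', Q-1) \leq QLD_{T_{Y_{t_1}}}(\mathcal{P}', Q-1, 1)$ is actually available; and (c) the $T_0 = T_1$ corner case is handled, where the adversary can always force a mistake at the query step but the ``$+1$'' in the recurrence precisely absorbs this cost, justifying BP-SOA's arbitrary tie-breaking.
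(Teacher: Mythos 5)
Your proposal is correct and follows essentially the same route as the paper: both use BP-SOA as the witness algorithm, induct on $(\mathcal{P}, Q)$ with the $Q=0$ base case handled by Lemma~\ref{lemma: blind_learning_predict-query}, decompose the mistakes into pre-query, query-round, and post-query phases, and close the recursion via the minimality fact $QLD(\mathcal{P}_{(\star,\mathbf{y})(x_{t_1},y_{t_1})}, Q-1) \leq QLD_{T_{y_{t_1}}}(\cdot, Q, 1)$. The only cosmetic difference is that the paper phrases the final telescoping step as a contradiction argument while you sum the three bounds directly; the verification points (a)--(c) you flag are exactly the bookkeeping the paper's proof also leans on.
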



\begin{proof}
        Let $\mathcal{A}$ be the BP-SOA which is detailed in Algorithm \ref{alg: blind-prediction-soa}. A proof by induction will be established on the pair $(\mathcal{P}, Q)$ taking $Q = 0$ to be the base case. 
        Let $\mathcal{P}' \subseteq \mathcal{P}$. In the base case, we execute Algorithm \ref{alg: blind-prediction-soa} with the inputs $\mathcal{P}'$ and $Q = 0$. Since $Q = 0$, Algorithm \ref{alg: blind-prediction-soa} selects the vector $\hat{\mathbf{y}}$ corresponding to the $\mathrm{BlindLearningDimension}(\mathcal{P}')$ and appends it to $\hat{\mathbf{y}}_l$ (lines 22-24). Then, $\mathcal{A}$ skips to lines 28-30 making predictions according to $\hat{\mathbf{y}}_l$. Then, we refer to Lemma \ref{lemma: blind_learning_predict-query} to show that $\mathrm{BlindLearningDimension}(\mathcal{P}') \leq M_0(\mathcal{P}') \leq M_0(\mathrm{Algorithm}~\ref{alg: blind-prediction-soa}, \mathcal{P}') \leq \mathrm{BlindLearningDimension}(\mathcal{P}')$ implying that $M_0(\mathcal{P}') = \mathrm{BlindLearningDimension}(\mathcal{P}')$.

        Now, we apply the inductive step on $(\mathcal{P}', Q')$ where $\mathcal{P}' \subseteq \mathcal{P}$ and $Q' < Q$, then $M_{Q'}(\mathcal{A}, \mathcal{P}') \leq QLD(\mathcal{P}', Q')$. The rest of the proof is devoted to showing that $M_Q(\mathcal{A}, \mathcal{P}) \leq QLD(\mathcal{P}, Q)$.

        From Algorithm \ref{alg: blind-prediction-soa}, we know that $\mathcal{A}$ selects the query tree $T$ such that $QLD_T(\mathcal{P}, Q, 0) = QLD(\mathcal{P}, Q)$ on line 5 with the first query timestamp $t_1 = V^t_1$. 
        For rounds $t < t_1$ rounds, Algorithm \ref{alg: blind-prediction-soa} will select its predictions $\hat{y}_t$ that minimizes the optimization expression in lines 10 and 19 based on the history of previous queries.
        On round $t_1$, Algorithm \ref{alg: blind-prediction-soa} selects $\hat{y}_{t_1}$ in line 15 based on the larger subtree, $T_0$ or $T_1$.
        Without loss of generality, assume that $T_0$ is the larger subtree and in the case of a tie, $T_0$ is selected.
        Then, $\hat{y}_{t_1} = 0$ and the learner receives $(x_{t_1}, y_{t_1})$ after querying.

        It follows that the mistakes made by the learner on the first $t_1$ rounds correspond to $|\mathbf{y} - \hat{\mathbf{y}}| + \mathbbm{1}[\hat{y}_{t_1} \neq y_{t_1}]$ where $\mathbf{y}$ and $y_{t_1}$ represent the true labels selected by the adversary. Since the adversary is operating under the constraint of realizability, then it must hold that $\mathcal{P}_{(\star, \mathbf{y})(x_{t_1}, y_{t_1})} \neq \emptyset$ where $\star$ is a placeholder for any sequence of instances satisfying the constraint. 
        From the inductive step, it follows that $|\hat{\mathbf{y}} - \mathbf{y}| + \mathbbm{1}[\hat{y}_{t_1} \neq y_{t_1}] + M_{Q-1}(\mathcal{A}, \mathcal{P}_{(\star, \mathbf{y})(x_{t_1}, y_{t_1})} ) \leq |\hat{\mathbf{y}} - \mathbf{y}| + \mathbbm{1}[\hat{y}_{t_1} \neq y_{t_1}] + QLD(\mathcal{P}_{(\star, \mathbf{y})(x_{t_1}, y_{t_1}) }, Q-1)$.
        Assume that the adversary's choices of instances and true labels on the first $t_1$ rounds yield $|\hat{\mathbf{y}} - \mathbf{y}| + \mathbbm{1}[\hat{y}_{t_1} \neq y_{t_1}] + QLD(\mathcal{P}_{(\star, \mathbf{y})(x_{t_1}, y_{t_1}) }, Q-1) > QLD(\mathcal{P}, Q) $. 
        Since $\mathcal{A}$ selected tree $T$, this implies that $QLD_T(\mathcal{P}, Q, 0) = QLD(\mathcal{P}, Q)$.
        Additionally, $\mathcal{A}$ always selects the predictions that minimizes over the worst possible game outcomes (line 19 of Algorithm \ref{alg: blind-prediction-soa}) with the query prediction aligning with that of the larger subtree. As a result $ |\hat{\mathbf{y}} - \mathbf{y}| + \mathbbm{1}[\hat{y}_{t_1} \neq y_{t_1}] + QLD_{T_L}(\mathcal{P}_{(\star, \mathbf{y})(x_{t_1}, y_{t_1})}, Q, 1) \leq QLD(\mathcal{P}, Q)$.
        And by definition $QLD(\mathcal{P}_{(\star, \mathbf{y})(x_{t_1}, y_{t_1}) }, Q-1) \leq QLD_{T_L}(\mathcal{P}_{(\star, \mathbf{y})(x_{t_1}, y_{t_1}) }, Q, 1)$, so it must hold that $|\hat{\mathbf{y}} - \mathbf{y}| + \mathbbm{1}[\hat{y}_{t_1} \neq y_{t_1}] + QLD(\mathcal{P}_{(\star, \mathbf{y})(x_{t_1}, y_{t_1}) }, Q-1) \leq QLD(\mathcal{P}, Q) $.
        As a result, $|\hat{\mathbf{y}} - \mathbf{y}| + \mathbbm{1}[\hat{y}_{t_1} \neq y_{t_1}] + M_{Q-1}(\mathcal{A}, \mathcal{P}_{(\star, \mathbf{y})(x_{t_1}, y_{t_1})} ) \leq QLD(\mathcal{P}, Q)$.
        Since this inequality holds for any choice of $\mathbf{y}$, $x_{t_1}$, and $y_{t_1}$, it follows that $M_Q(\mathcal{A}, \mathcal{P}) \leq QLD(\mathcal{P}, Q)$.
        Since $M_Q(\mathcal{P}) \leq M_Q(\mathcal{A}, \mathcal{P})$, we show that $M_Q(\mathcal{P}) \leq QLD(\mathcal{P}, Q)$.   
\end{proof}

\newpage


\section*{NeurIPS Paper Checklist}

\begin{enumerate}

\item {\bf Claims}
    \item[] Question: Do the main claims made in the abstract and introduction accurately reflect the paper's contributions and scope?
    \item[] Answer: \answerYes{} 
    \item[] Justification: In our abstract and in Section 1.3 of the introduction, we give a detailed description of the claims and results we prove in the paper as a technical overview.
    \item[] Guidelines:
    \begin{itemize}
        \item The answer NA means that the abstract and introduction do not include the claims made in the paper.
        \item The abstract and/or introduction should clearly state the claims made, including the contributions made in the paper and important assumptions and limitations. A No or NA answer to this question will not be perceived well by the reviewers. 
        \item The claims made should match theoretical and experimental results, and reflect how much the results can be expected to generalize to other settings. 
        \item It is fine to include aspirational goals as motivation as long as it is clear that these goals are not attained by the paper. 
    \end{itemize}

\item {\bf Limitations}
    \item[] Question: Does the paper discuss the limitations of the work performed by the authors?
    \item[] Answer: \answerYes{} 
    \item[] Justification: Throughout the paper, we explicitly state the assumptions and conditions our results hold under. For example, in Sections 3 and 4, we specifically work with learning algorithms that have a linear querying strategy and an open direction would be to understand a broader family of strategies. In Section 4.3, we mention that developing a theory for pattern classes in the continuous case is quite challenging so we simplify the problem in the discrete setting. In Appendix B.1 we mention that the theory only holds for the blind-prediction setting and a future direction would extend to the update-and-deploy setting.
    \item[] Guidelines:
    \begin{itemize}
        \item The answer NA means that the paper has no limitation while the answer No means that the paper has limitations, but those are not discussed in the paper. 
        \item The authors are encouraged to create a separate "Limitations" section in their paper.
        \item The paper should point out any strong assumptions and how robust the results are to violations of these assumptions (e.g., independence assumptions, noiseless settings, model well-specification, asymptotic approximations only holding locally). The authors should reflect on how these assumptions might be violated in practice and what the implications would be.
        \item The authors should reflect on the scope of the claims made, e.g., if the approach was only tested on a few datasets or with a few runs. In general, empirical results often depend on implicit assumptions, which should be articulated.
        \item The authors should reflect on the factors that influence the performance of the approach. For example, a facial recognition algorithm may perform poorly when image resolution is low or images are taken in low lighting. Or a speech-to-text system might not be used reliably to provide closed captions for online lectures because it fails to handle technical jargon.
        \item The authors should discuss the computational efficiency of the proposed algorithms and how they scale with dataset size.
        \item If applicable, the authors should discuss possible limitations of their approach to address problems of privacy and fairness.
        \item While the authors might fear that complete honesty about limitations might be used by reviewers as grounds for rejection, a worse outcome might be that reviewers discover limitations that aren't acknowledged in the paper. The authors should use their best judgment and recognize that individual actions in favor of transparency play an important role in developing norms that preserve the integrity of the community. Reviewers will be specifically instructed to not penalize honesty concerning limitations.
    \end{itemize}

\item {\bf Theory Assumptions and Proofs}
    \item[] Question: For each theoretical result, does the paper provide the full set of assumptions and a complete (and correct) proof?
    \item[] Answer: \answerYes{} 
    \item[] Justification: In every theoretical statement, we characterize the assumptions and conditions of the statement and then we give a proof. 
    \item[] Guidelines:
    \begin{itemize}
        \item The answer NA means that the paper does not include theoretical results. 
        \item All the theorems, formulas, and proofs in the paper should be numbered and cross-referenced.
        \item All assumptions should be clearly stated or referenced in the statement of any theorems.
        \item The proofs can either appear in the main paper or the supplemental material, but if they appear in the supplemental material, the authors are encouraged to provide a short proof sketch to provide intuition. 
        \item Inversely, any informal proof provided in the core of the paper should be complemented by formal proofs provided in appendix or supplemental material.
        \item Theorems and Lemmas that the proof relies upon should be properly referenced. 
    \end{itemize}

    \item {\bf Experimental Result Reproducibility}
    \item[] Question: Does the paper fully disclose all the information needed to reproduce the main experimental results of the paper to the extent that it affects the main claims and/or conclusions of the paper (regardless of whether the code and data are provided or not)?
    \item[] Answer: \answerNA{} 
    \item[] Justification: This is a completely theoretical paper so there are no experiments.
    \item[] Guidelines:
    \begin{itemize}
        \item The answer NA means that the paper does not include experiments.
        \item If the paper includes experiments, a No answer to this question will not be perceived well by the reviewers: Making the paper reproducible is important, regardless of whether the code and data are provided or not.
        \item If the contribution is a dataset and/or model, the authors should describe the steps taken to make their results reproducible or verifiable. 
        \item Depending on the contribution, reproducibility can be accomplished in various ways. For example, if the contribution is a novel architecture, describing the architecture fully might suffice, or if the contribution is a specific model and empirical evaluation, it may be necessary to either make it possible for others to replicate the model with the same dataset, or provide access to the model. In general. releasing code and data is often one good way to accomplish this, but reproducibility can also be provided via detailed instructions for how to replicate the results, access to a hosted model (e.g., in the case of a large language model), releasing of a model checkpoint, or other means that are appropriate to the research performed.
        \item While NeurIPS does not require releasing code, the conference does require all submissions to provide some reasonable avenue for reproducibility, which may depend on the nature of the contribution. For example
        \begin{enumerate}
            \item If the contribution is primarily a new algorithm, the paper should make it clear how to reproduce that algorithm.
            \item If the contribution is primarily a new model architecture, the paper should describe the architecture clearly and fully.
            \item If the contribution is a new model (e.g., a large language model), then there should either be a way to access this model for reproducing the results or a way to reproduce the model (e.g., with an open-source dataset or instructions for how to construct the dataset).
            \item We recognize that reproducibility may be tricky in some cases, in which case authors are welcome to describe the particular way they provide for reproducibility. In the case of closed-source models, it may be that access to the model is limited in some way (e.g., to registered users), but it should be possible for other researchers to have some path to reproducing or verifying the results.
        \end{enumerate}
    \end{itemize}

\item {\bf Open access to data and code}
    \item[] Question: Does the paper provide open access to the data and code, with sufficient instructions to faithfully reproduce the main experimental results, as described in supplemental material?
    \item[] Answer: \answerNA{} 
    \item[] Justification: This is a completely theoretical paper so there are no experiments.
    \item[] Guidelines:
    \begin{itemize}
        \item The answer NA means that paper does not include experiments requiring code.
        \item Please see the NeurIPS code and data submission guidelines (\url{https://nips.cc/public/guides/CodeSubmissionPolicy}) for more details.
        \item While we encourage the release of code and data, we understand that this might not be possible, so “No” is an acceptable answer. Papers cannot be rejected simply for not including code, unless this is central to the contribution (e.g., for a new open-source benchmark).
        \item The instructions should contain the exact command and environment needed to run to reproduce the results. See the NeurIPS code and data submission guidelines (\url{https://nips.cc/public/guides/CodeSubmissionPolicy}) for more details.
        \item The authors should provide instructions on data access and preparation, including how to access the raw data, preprocessed data, intermediate data, and generated data, etc.
        \item The authors should provide scripts to reproduce all experimental results for the new proposed method and baselines. If only a subset of experiments are reproducible, they should state which ones are omitted from the script and why.
        \item At submission time, to preserve anonymity, the authors should release anonymized versions (if applicable).
        \item Providing as much information as possible in supplemental material (appended to the paper) is recommended, but including URLs to data and code is permitted.
    \end{itemize}

\item {\bf Experimental Setting/Details}
    \item[] Question: Does the paper specify all the training and test details (e.g., data splits, hyperparameters, how they were chosen, type of optimizer, etc.) necessary to understand the results?
    \item[] Answer: \answerNA{} 
    \item[] Justification: This is a completely theoretical paper so there are no experiments.
    \item[] Guidelines:
    \begin{itemize}
        \item The answer NA means that the paper does not include experiments.
        \item The experimental setting should be presented in the core of the paper to a level of detail that is necessary to appreciate the results and make sense of them.
        \item The full details can be provided either with the code, in appendix, or as supplemental material.
    \end{itemize}

\item {\bf Experiment Statistical Significance}
    \item[] Question: Does the paper report error bars suitably and correctly defined or other appropriate information about the statistical significance of the experiments?
    \item[] Answer: \answerNA{} 
    \item[] Justification: This is a completely theoretical paper so there are no experiments.
    \item[] Guidelines:
    \begin{itemize}
        \item The answer NA means that the paper does not include experiments.
        \item The authors should answer "Yes" if the results are accompanied by error bars, confidence intervals, or statistical significance tests, at least for the experiments that support the main claims of the paper.
        \item The factors of variability that the error bars are capturing should be clearly stated (for example, train/test split, initialization, random drawing of some parameter, or overall run with given experimental conditions).
        \item The method for calculating the error bars should be explained (closed form formula, call to a library function, bootstrap, etc.)
        \item The assumptions made should be given (e.g., Normally distributed errors).
        \item It should be clear whether the error bar is the standard deviation or the standard error of the mean.
        \item It is OK to report 1-sigma error bars, but one should state it. The authors should preferably report a 2-sigma error bar than state that they have a 96\% CI, if the hypothesis of Normality of errors is not verified.
        \item For asymmetric distributions, the authors should be careful not to show in tables or figures symmetric error bars that would yield results that are out of range (e.g. negative error rates).
        \item If error bars are reported in tables or plots, The authors should explain in the text how they were calculated and reference the corresponding figures or tables in the text.
    \end{itemize}

\item {\bf Experiments Compute Resources}
    \item[] Question: For each experiment, does the paper provide sufficient information on the computer resources (type of compute workers, memory, time of execution) needed to reproduce the experiments?
    \item[] Answer: \answerNA{} 
    \item[] Justification: This is a completely theoretical paper so there are no experiments.
    \item[] Guidelines:
    \begin{itemize}
        \item The answer NA means that the paper does not include experiments.
        \item The paper should indicate the type of compute workers CPU or GPU, internal cluster, or cloud provider, including relevant memory and storage.
        \item The paper should provide the amount of compute required for each of the individual experimental runs as well as estimate the total compute. 
        \item The paper should disclose whether the full research project required more compute than the experiments reported in the paper (e.g., preliminary or failed experiments that didn't make it into the paper). 
    \end{itemize}
    
\item {\bf Code Of Ethics}
    \item[] Question: Does the research conducted in the paper conform, in every respect, with the NeurIPS Code of Ethics \url{https://neurips.cc/public/EthicsGuidelines}?
    \item[] Answer: \answerYes{} 
    \item[] Justification: The research in this paper conforms to the NeurIPS Code of Ethics.
    \item[] Guidelines:
    \begin{itemize}
        \item The answer NA means that the authors have not reviewed the NeurIPS Code of Ethics.
        \item If the authors answer No, they should explain the special circumstances that require a deviation from the Code of Ethics.
        \item The authors should make sure to preserve anonymity (e.g., if there is a special consideration due to laws or regulations in their jurisdiction).
    \end{itemize}

\item {\bf Broader Impacts}
    \item[] Question: Does the paper discuss both potential positive societal impacts and negative societal impacts of the work performed?
    \item[] Answer: \answerNA{} 
    \item[] Justification: Since this is a learning theory paper focused on characterizing learnability and complexity of learning problems, we do not see any immediate negative societal impact.
    \item[] Guidelines:
    \begin{itemize}
        \item The answer NA means that there is no societal impact of the work performed.
        \item If the authors answer NA or No, they should explain why their work has no societal impact or why the paper does not address societal impact.
        \item Examples of negative societal impacts include potential malicious or unintended uses (e.g., disinformation, generating fake profiles, surveillance), fairness considerations (e.g., deployment of technologies that could make decisions that unfairly impact specific groups), privacy considerations, and security considerations.
        \item The conference expects that many papers will be foundational research and not tied to particular applications, let alone deployments. However, if there is a direct path to any negative applications, the authors should point it out. For example, it is legitimate to point out that an improvement in the quality of generative models could be used to generate deepfakes for disinformation. On the other hand, it is not needed to point out that a generic algorithm for optimizing neural networks could enable people to train models that generate Deepfakes faster.
        \item The authors should consider possible harms that could arise when the technology is being used as intended and functioning correctly, harms that could arise when the technology is being used as intended but gives incorrect results, and harms following from (intentional or unintentional) misuse of the technology.
        \item If there are negative societal impacts, the authors could also discuss possible mitigation strategies (e.g., gated release of models, providing defenses in addition to attacks, mechanisms for monitoring misuse, mechanisms to monitor how a system learns from feedback over time, improving the efficiency and accessibility of ML).
    \end{itemize}
    
\item {\bf Safeguards}
    \item[] Question: Does the paper describe safeguards that have been put in place for responsible release of data or models that have a high risk for misuse (e.g., pretrained language models, image generators, or scraped datasets)?
    \item[] Answer: \answerNA{} 
    \item[] Justification: This is a completely theoretical paper so there are no datasets or experimental models.
    \item[] Guidelines:
    \begin{itemize}
        \item The answer NA means that the paper poses no such risks.
        \item Released models that have a high risk for misuse or dual-use should be released with necessary safeguards to allow for controlled use of the model, for example by requiring that users adhere to usage guidelines or restrictions to access the model or implementing safety filters. 
        \item Datasets that have been scraped from the Internet could pose safety risks. The authors should describe how they avoided releasing unsafe images.
        \item We recognize that providing effective safeguards is challenging, and many papers do not require this, but we encourage authors to take this into account and make a best faith effort.
    \end{itemize}

\item {\bf Licenses for existing assets}
    \item[] Question: Are the creators or original owners of assets (e.g., code, data, models), used in the paper, properly credited and are the license and terms of use explicitly mentioned and properly respected?
    \item[] Answer: \answerNA{} 
    \item[] Justification: This is a completely theoretical paper so we don't have any code, data, or models.
    \item[] Guidelines:
    \begin{itemize}
        \item The answer NA means that the paper does not use existing assets.
        \item The authors should cite the original paper that produced the code package or dataset.
        \item The authors should state which version of the asset is used and, if possible, include a URL.
        \item The name of the license (e.g., CC-BY 4.0) should be included for each asset.
        \item For scraped data from a particular source (e.g., website), the copyright and terms of service of that source should be provided.
        \item If assets are released, the license, copyright information, and terms of use in the package should be provided. For popular datasets, \url{paperswithcode.com/datasets} has curated licenses for some datasets. Their licensing guide can help determine the license of a dataset.
        \item For existing datasets that are re-packaged, both the original license and the license of the derived asset (if it has changed) should be provided.
        \item If this information is not available online, the authors are encouraged to reach out to the asset's creators.
    \end{itemize}

\item {\bf New Assets}
    \item[] Question: Are new assets introduced in the paper well documented and is the documentation provided alongside the assets?
    \item[] Answer: \answerNA{} 
    \item[] Justification: This is a completely theoretical paper so we don't have any code, data, or models.
    \item[] Guidelines:
    \begin{itemize}
        \item The answer NA means that the paper does not release new assets.
        \item Researchers should communicate the details of the dataset/code/model as part of their submissions via structured templates. This includes details about training, license, limitations, etc. 
        \item The paper should discuss whether and how consent was obtained from people whose asset is used.
        \item At submission time, remember to anonymize your assets (if applicable). You can either create an anonymized URL or include an anonymized zip file.
    \end{itemize}

\item {\bf Crowdsourcing and Research with Human Subjects}
    \item[] Question: For crowdsourcing experiments and research with human subjects, does the paper include the full text of instructions given to participants and screenshots, if applicable, as well as details about compensation (if any)? 
    \item[] Answer: \answerNA{} 
    \item[] Justification: We do not conduct any crowdsourcing experiments or research with human subjects.
    \item[] Guidelines:
    \begin{itemize}
        \item The answer NA means that the paper does not involve crowdsourcing nor research with human subjects.
        \item Including this information in the supplemental material is fine, but if the main contribution of the paper involves human subjects, then as much detail as possible should be included in the main paper. 
        \item According to the NeurIPS Code of Ethics, workers involved in data collection, curation, or other labor should be paid at least the minimum wage in the country of the data collector. 
    \end{itemize}

\item {\bf Institutional Review Board (IRB) Approvals or Equivalent for Research with Human Subjects}
    \item[] Question: Does the paper describe potential risks incurred by study participants, whether such risks were disclosed to the subjects, and whether Institutional Review Board (IRB) approvals (or an equivalent approval/review based on the requirements of your country or institution) were obtained?
    \item[] Answer: \answerNA{} 
    \item[] Justification: We do not conduct any crowdsourcing experiments or research with human subjects.
    \item[] Guidelines:
    \begin{itemize}
        \item The answer NA means that the paper does not involve crowdsourcing nor research with human subjects.
        \item Depending on the country in which research is conducted, IRB approval (or equivalent) may be required for any human subjects research. If you obtained IRB approval, you should clearly state this in the paper. 
        \item We recognize that the procedures for this may vary significantly between institutions and locations, and we expect authors to adhere to the NeurIPS Code of Ethics and the guidelines for their institution. 
        \item For initial submissions, do not include any information that would break anonymity (if applicable), such as the institution conducting the review.
    \end{itemize}

\end{enumerate}

\end{document}